\documentclass{article}
\usepackage{arxiv}
\usepackage{amsmath}
\usepackage[utf8]{inputenc} 
\usepackage[T1]{fontenc}    
\usepackage{hyperref}       
\usepackage{url}            
\usepackage{booktabs}       
\usepackage{amsfonts}       
\usepackage{nicefrac}       
\usepackage{microtype}      
\usepackage{lipsum}
\usepackage{graphicx}
\graphicspath{ {./images/} }
\usepackage[authoryear]{natbib}
\RequirePackage{graphicx}
\usepackage{booktabs}
\usepackage{tikz} 
\usepackage{verbatim}
\usepackage{caption}
\usepackage{subcaption}
\usepackage{bbm}
\usepackage{mathabx}
\usepackage{inconsolata}
\usepackage{subcaption}

\usepackage{graphicx}
\usepackage{float}

\usepackage{hyperref}       
\usepackage{url}            
\usepackage{booktabs}       
\usepackage{amsfonts}       
\usepackage{nicefrac}       

\usepackage{xcolor}         

\usepackage{amsmath,amssymb,amsfonts}
\usepackage{amsthm}
\newtheorem{theorem}{Theorem}
\usepackage{booktabs}
\usepackage{multirow}
\usepackage{siunitx}

\theoremstyle{remark}
\newtheorem{remark}[theorem]{Remark}

\title{Keyless Attention: Value-Space Routing and Value-Only Caching for Efficient Transformers}

\author{
 Xin Gao \\
  Department of Mathematics and Statistics\\
  York University\\
  Toronto, ON M3J 1P3 \\
  \texttt{xingao@yorku.ca} \\
  \AND
  Xingming Xu\\
  Department of Computer Science\\
  University of California, Davis\\
  Davis, CA 95616\\
  \texttt{xmxu@ucdavis.edu}
 }
\date{}
\begin{document}
\maketitle

\begin{abstract}

Transformer architectures form the foundation of modern natural
language processing, yet the Key-Value (KV) cache introduces
substantial memory and bandwidth overhead during long-context
generation, increasingly bottlenecking large-scale deployment.
We propose Keyless Attention, a novel attention mechanism that
replaces the conventional key projection with a dedicated
value-space routing projection, eliminating key representations
from the attention computation entirely and yielding a Value-Only
Cache that reduces KV-cache memory by 50\% while improving decode
throughput.
Experiments across multiple models and architectures demonstrate
that Keyless Attention achieves comparable perplexity and downstream
task performance to standard QKV attention, while consistently
reducing KV-cache memory by 50\%.
Furthermore, Keyless Attention exhibits slower validation loss
degradation after the best epoch, indicating improved robustness
against overfitting.
Ablation studies confirm that the dedicated value-space routing
projection is critical, with Keyless Attention outperforming
KV-sharing methods that eliminate the key cache without replacing
its routing role.
Experiments in the pretraining regime further confirm the viability
of Keyless Attention in industrial settings.
\end{abstract}

\thispagestyle{fancy}

\section{Introduction}

The Transformer-based model has emerged as the dominant deep learning architecture for training state-of-the-art  language models, and has seen success in a vast number of other domains and applications. At its heart lies the scaled dot-product attention mechanism. The standard formulation, using Query (Q), Key (K), and Value (V) projections, enables models to capture long-range dependency relationships more efficiently and effectively than its recurrence-based predecessors. 

However, standard attention is quadratic in complexity with respect to sequence length, and its resource demands pose a challenge for inference and deployment at scale. 
Sparse attention methods \citep{child2019sparse,beltagy2020longformer} introduce structured sparsity patterns into attention computation to reduce complexity. Kernel-based approaches \citep{katharopoulos2020transformers,choromanski2021rethinking} replace softmax attention with kernel formulations to achieve linear computational complexity. Low-rank approximation methods \citep{wang2020linformer} project the attention computation into lower-dimensional spaces to reduce computational cost. 

In autoregressive generation, KV caching is widely adopted to avoid recomputing attention over past tokens \citep{vaswani2017attention, shazeer2019fast, chowdhery2022palmscalinglanguagemodeling, pope2023efficiently, kwon2023efficient}. Decoding is reduced to linear complexity per token by simply \textit{caching} previously computed key and value tensors. This optimization comes at the cost of linear growth in memory usage with respect to sequence length and batch size, and scaling proportionally with the number of layers and attention heads. In large-scale models deployed with long context windows, the KV cache can exceed the size of model parameters and become the primary constraint for long-context inference. Architectural approaches such as Multi-Query Attention (MQA) \citep{shazeer2019fast} and Grouped-Query Attention (GQA) \citep{ainslie2023gqa} reduce redundancy across attention heads by sharing key and value projections across heads. Multi-Head Latent Attention (MLA) \citep{deepseekv2} is proposed to compress keys and values into a low-dimensional latent vector before caching, achieving substantially greater memory reduction.  

Recent system level work has been proposed towards KV cache management. 
For example, \citet{kwon2023efficient} introduce a virtual memory abstraction for KV storage to improve memory utilization and throughput. \citet{prabhu2025vattention} further investigate dynamic physical memory allocation to reduce KV-cache management overhead. \citet{liao2026zipage} propose compression-aware scheduling to improve request concurrency.

Beyond system-level improvements, recent research has also explored algorithmic approaches to reduce KV cache memory. \citet{kang2025turboattention} apply KV-cache quantization to improve memory efficiency and computational throughput. \citet{wen2025token,xu2025think} propose token pruning methods to reduce KV-cache size by removing less important tokens. \citet{huang2025kv} select important tokens to retain in the cache.
These methods reduce memory usage, but again introduce additional complexity and may risk discarding useful long-range dependencies. RazorAttention \citep{tang2025razorattention} exploits head-level specialization to reduce KV storage. While effective, these methods still maintain per-token KV representations across all layers. Slim Attention \citep{slim_attention2025} reduces KV cache memory by 50\% by recovering value representations exactly from cached keys via a closed-form transform $V = KW_{KV}$, where $W_{KV}= W_K^{-1}W_V$ is derived analytically from the pretrained model's existing weight matrices.  \citet{edward2026qv, kayyam2026transformers, team2026gemma} propose the KV-sharing method which let key and value share the same projection and thus reduce the KV cache by 50\%. The KV-sharing methods eliminates key projection but offer no explicit replacement for its token-routing role.

In this work, we propose \textit{Keyless Attention}, a novel mechanism that directly computes the attention score between queries and values, eliminating the construction of keys altogether. Compared to previous KV cache-sharing approaches which use identical representations for both keys and values, Keyless Attention introduces a dedicated value-space routing matrix that replaces the key projection. 
Our approach only requires a value cache without sacrificing model expressivity and remains compatible with standard LLM training procedures. We evaluate our method on language modeling and downstream reasoning benchmarks, demonstrating competitive performance while significantly reducing memory usage during inference.

Our contributions are summarized as follows:
\begin{enumerate}
    \item \textbf{Keyless Attention with Value-Only Cache.}
    We propose a novel attention mechanism that computes attention scores directly between queries and values, eliminating the key projection entirely.
    In autoregressive inference, this yields a Value-Only Cache halving KV cache memory and access overhead  compared to standard attention. 
    \item \textbf{Value-Space Routing and Depth-$m$ Attention Factorization.}
    We introduce a value-space routing matrix which replaces the key projection, and frame it within the context of a general \emph{Depth-$m$ Attention Factorization}: standard attention computes a depth-2 factorization of the attention bilinear form, while Keyless Attention realizes a depth-$m$ instance of this family. We instantiate the proposed architecture with m=3, matching the projection matrix count of standard attention, and find empirically that this new architecture matches standard attention's performance.

    \item \textbf{Empirical Validation.}
    We validate Keyless Attention across five models and four architectures (GPT2 280M, GPT2 557M, Pythia 410M, Qwen2 1.5B, and Llama 3.2 1B), demonstrating that it matches or outperforms standard QKV attention on perplexity in 4 out of 5 models, and achieves competitive performance on 5 downstream benchmarks.
 %

\end{enumerate}

\section{Method}
\subsection{Rewriting Attention}

The motivation for Keyless Attention draws from human cognition: when generating language, the human brain retrieves relevant context when thought "queries" its memory. Biology does not store a key: humans keep a single store of past "representations", rather than maintaining separate routing and retrieval copies of a sequence as standard attention does. 

Let us begin with standard attention from which we can  better define Keyless Attention. Let $X \in \mathbb{R}^{n \times d}$ denote the input sequence of $n$ tokens with hidden dimension $d$. Standard scaled dot-product attention~\citep{vaswani2017attention} computes:
\begin{equation}
    \text{Attn}(X) = \text{softmax}\!\left(\frac{QK^\top}{\sqrt{d_k}}\right)V,
    \label{eq:standard_attn}
\end{equation}
where $Q = XW^Q$, $K = XW^K$, $V = XW^V$, with projection matrices
$W^Q, W^K$ and
$W^V \in \mathbb{R}^{d \times d_k}$,
and $d_k = d / N_h$ is the per-head dimension with $N_h$ attention heads. The key projection $W^K$ here serves a dedicated routing role. Each token's hidden state is mapped into a key space which $W^Q$ queries to determine attention weights, independently of what is ultimately retrieved via $W^V$.

We note that the role of the key matrix in standard attention is to facilitate the computation of attention scores:
$$\text{softmax}(\frac{QK^{\top}}{\sqrt{d_k}})=\text{softmax}(\frac{XW^Q (W^K)^{\top} X^{\top}}{\sqrt{d_k}}).$$ 
Let $\Omega = W^Q(W^K)^{\top}$, and the matrix $X\Omega X^{\top}$ can be 
viewed as an asymmetric bilinear-form-induced similarity matrix between token representations. 

Again, the natural question arises: can the same $\Omega$ be achieved without a dedicated key projection by routing directly through the value space instead? This motivates \textbf{Keyless Attention}: 
\begin{equation}
\begin{split}
\label{eq:output}
&\mathrm{Attention}\left(X, W^Q, W^R, W^V\right)\\
=&\mathrm{softmax}\left(\frac{XW^Q W^R (W^V)^{\top}X^{\top}}{\sqrt d_k}\right)XW^V,
\end{split}
\end{equation}
where $W^R$ is the dedicated routing matrix mapping from the query space to value space.


Compared to standard attention, we retain the same number of projection matrices. Let us refer to $W^{R}$ as the \emph{value-routing projection}, as it adapts the query representation to be compatible with the value space. The matrix $W^R$ will serve the routing function that $W^K$ once held. In training, $W^{Q}$ and $W^{R}$ are still learned as two separate weight matrices. During inference, we precompute  the composed matrix $\tilde{W}^Q = W^{Q}W^{R}$, which serves as the effective query projection against $W^V$.


In this way, we eliminate the construction and storage of the keys; hence, in inference, Keyless Attention requires less compute and memory. Moreover, this enables $\tilde{W}^Q$ to be precomputed such that the increased factorization depth incurs no additional compute. This reveals a previously unexplored axis of attention design: \emph{factorization depth} $m$, defined as the number of projection matrices in the defining matrix $\Omega$ of the bilinear form.
Within this framework, the attention mechanism of \citet{luong2015effective} corresponds to $m=1$, while the Transformer attention of \citet{vaswani2017attention} corresponds to $m=2$.
while we explore $m=3$ in this work and find it matches the perplexity and downstream performance of standard depth-2 attention, while halving KV cache memory. In general, the depth of $\Omega$ can be flexibly chosen by factorization $\Omega$ into a product of $m$ projection matrices. In a later section, we conduct ablation studies to investigate the effect of factorization depth.  

\subsection{Value-Space Routing}
\label{sec:value_space_routing}


The central architectural distinction of Keyless Attention is that routing operates \emph{in value space} rather than in an independent key space. In standard attention, the routing score between positions $i$ and $j$ can be written as:
\begin{equation}
    s_{ij}^{\mathrm{QKV}}
    = \frac{(x_i W^Q)(x_j W^K)^\top}{\sqrt{d_k}},
\end{equation}
which is a bilinear form between two independently parametrized projections. The key space carries no direct semantic obligation. It primarily produces vectors that correlate with queries in a way that is useful for \textit{routing}, which is independent of what is ultimately \textit{retrieved}.

In Keyless Attention, the routing score is:
\begin{equation}
    s_{ij}^{\mathrm{KL}}
    = \frac{(x_i W^{Q} W^{R})(x_j W^V)^\top}{\sqrt{d_k}},
\end{equation}
where $x_j W^V$ is the same representation that will be aggregated in the output. This coupling between routing and retrieval imposes a meaningful inductive bias: the model learns to attend to tokens whose \emph{value content} is directly relevant to the query, instead of tokens that match in an auxiliary key space. We argue this constitutes a more semantically grounded routing signal.

The composition $W^{Q} W^{R} \in \mathbb{R}^{d \times d_k}$
acts as a rank-constrained transformation with
$\mathrm{rank}(W^{Q} W^{R}) \leq d_k$,
constraining the attention distribution to a lower-dimensional manifold. The role of $W^{R}$ is to learn a value-space-compatible remapping of the query within the $d_k$-dimensional subspace, aligning the routing signal directly with what is retrieved.

\subsection{Gradient Entanglement in Value-Space Routing}
\label{sec:regularisation}

A key consequence of value-space routing is the \textbf{gradient entanglement} between $W^{R}$ and $W^V$, in which the matrices $W^V$ and $W^R$ are mutually coupled throughout training via gradient descent. 

Let $S = QV^\top / \sqrt{d_k}$ denote the pre-softmax attention
score matrix, where $Q = XW^{Q}W^{R}$ and $V = XW^V$.
By the chain rule, the gradient of the loss $\mathcal{L}$
with respect to $W^{R}$ is:
\begin{equation}
    \nabla_{W^{R}}\mathcal{L}
    = \frac{1}{\sqrt{d_k}}
      \bigl(XW^{Q}\bigr)^\top
      \cdot \frac{\partial \mathcal{L}}{\partial S}
      \cdot V,
    \label{eq:grad_q2}
\end{equation}
where $V = XW^V$ appears explicitly.
Symmetrically, the gradient with respect to $W^V$ is:
\begin{equation}
    \nabla_{W^V}\mathcal{L}
    = \frac{1}{\sqrt{d_k}}
      X^\top \cdot \frac{\partial \mathcal{L}}{\partial S}^\top
      \cdot Q
    + X^\top \cdot \frac{\partial \mathcal{L}}{\partial V},
    \label{eq:grad_wv}
\end{equation}
where $Q = XW^{Q}W^{R}$ appears in the first term.
The two matrices are therefore mutually coupled throughout training: $\nabla_{W^{R}}\mathcal{L}$ depends on $W^V$ through $V$, and $\nabla_{W^V}\mathcal{L}$ depends on $W^{R}$ through $Q$. Each gradient step for one matrix changes the effective target for the other.

In contrast, the gradient of $W^K$ in standard attention is:
\begin{equation}
    \nabla_{W^K}\mathcal{L}
    = \frac{1}{\sqrt{d_k}}
      X^\top \cdot (\frac{\partial \mathcal{L}}{\partial S})^{\top}
      \cdot Q,
    \label{eq:grad_wk}
\end{equation}
where $Q = XW^Q$ but $W^V$ does not appear.
This now allows the routing matrix $W^K$ and the retrieval matrix $W^V$ to evolve independently, because their gradients are fully decoupled.

The resulting optimization dynamics couple routing and retrieval, preventing the routing parameters from evolving independently of the value representations. We hypothesize that this coupling acts as an implicit regularizer by reducing the capacity of the routing mechanism to specialize to corpus-specific co-occurrence patterns. This interpretation is consistent with the reduced overfitting observed in our experiments. However, this gradient coupling also complicates optimization, slowing convergence relative to conventional attention.

\subsection{Relationship between Keyless and Standard Attention}
\label{sec:equivalence}

In this section, we investigate the relationship between Keyless Attention and standard QKV attention in terms of their expressive power.
We first recall the attention score matrices for both mechanisms. In standard attention, the unnormalized attention logit matrix for a sequence of $n$ tokens is:
\begin{equation}
    S = X W^Q (X W^K)^\top = X W^Q (W^K)^\top X^\top
    \label{eq:standard_score}.
\end{equation}
In Keyless Attention, the key projection is eliminated and $W^Q$ is replaced by a different projection matrix
$\tilde{W}^Q \in \mathbb{R}^{d \times d_k}$,
giving attention logits:
\begin{equation}
    \tilde{S} = X \tilde{W}^Q (X W^V)^\top = X \tilde{W}^Q (W^V)^\top X^\top.
    \label{eq:keyless_score}
\end{equation}

The following theorem shows that in some situations, the defining matrix $\Omega = W^Q(W^K)^\top$ of standard attention can be matched by $\tilde{W}^Q(W^V)^\top$ in Keyless Attention,
which implies $S = \tilde{S}$ for any input $X$.

\begin{theorem}[Single-Head Equivalence, $N_h = 1$]
\label{thm:single_head}
Let $W^Q, W^K \in \mathbb{R}^{d \times d}$ and let
$W^V \in \mathbb{R}^{d \times d}$ be square with full rank $d$. Then there exists a unique $\tilde{W}^Q \in \mathbb{R}^{d \times d}$
such that:
\begin{equation}
    \tilde{W}^Q (W^V)^\top = W^Q (W^K)^\top,
    \label{eq:single_head_equiv}
\end{equation}
given explicitly by:
\begin{equation}
    \tilde{W}^Q = W^Q(W^K)^\top (W^V)^{-\top},
    \label{eq:single_head_solution}
\end{equation}
where $(W^V)^{-\top} = \bigl((W^V)^\top\bigr)^{-1}
\in \mathbb{R}^{d \times d}$.
\end{theorem}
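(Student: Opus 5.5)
The statement is a linear-algebra fact about solving a matrix equation of the form $A B = C$ for $A$, where $B = (W^V)^\top$ is invertible. I would split it into existence (verify the proposed formula) and uniqueness (show any solution must equal it). I would then add a one-line remark on why this yields $S = \tilde S$ for every input $X$.

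\emph{Existence.} Since $W^V$ is square of full rank $d$, it is invertible. Its transpose is then invertible too, with $\bigl((W^V)^\top\bigr)^{-1} = \bigl((W^V)^{-1}\bigr)^\top$, because $(W^V)^\top \bigl((W^V)^{-1}\bigr)^\top = \bigl((W^V)^{-1} W^V\bigr)^\top = I_d$. This justifies the notation $(W^V)^{-\top}$ in \eqref{eq:single_head_solution}. Substituting the candidate \eqref{eq:single_head_solution} into the left side of \eqref{eq:single_head_equiv} and using associativity gives
\[
W^Q (W^K)^\top (W^V)^{-\top} (W^V)^\top = W^Q (W^K)^\top I_d = W^Q (W^K)^\top .
\]
So the candidate is a solution.

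\emph{Uniqueness.} Suppose $\hat W, \tilde W^Q$ both satisfy \eqref{eq:single_head_equiv}. Subtracting gives $(\hat W - \tilde W^Q)(W^V)^\top = 0$. Right-multiplying by $(W^V)^{-\top}$ yields $\hat W = \tilde W^Q$. Equivalently, right-multiplying \eqref{eq:single_head_equiv} by $(W^V)^{-\top}$ forces any solution to equal \eqref{eq:single_head_solution}.

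\emph{Consequence.} Left-multiplying $\tilde W^Q (W^V)^\top = W^Q (W^K)^\top$ by $X$ and right-multiplying by $X^\top$ gives $\tilde S = S$ in \eqref{eq:keyless_score} and \eqref{eq:standard_score}, for every $X \in \mathbb{R}^{n\times d}$. The softmax scaling is identical in both mechanisms, so the attention weights coincide.

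There is no real obstacle. The only step needing care is justifying that the transpose of an invertible matrix is invertible, with inverse equal to the transpose of the inverse, since existence and uniqueness both hinge on it. I would also note that the full-rank hypothesis on $W^V$ is exactly what is needed. If it failed, uniqueness would break, because $\tilde W^Q$ could be shifted by anything with rows in the left null space of $(W^V)^\top$. Existence could also fail, since $W^Q(W^K)^\top$ need not have its row space contained in the row space of $(W^V)^\top$.
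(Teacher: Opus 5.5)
Your proposal is correct and follows the paper's proof essentially verbatim: existence by substituting $\tilde{W}^Q = W^Q(W^K)^\top(W^V)^{-\top}$ and cancelling $(W^V)^{-\top}(W^V)^\top = I_d$, and uniqueness by right-multiplying $(\tilde{W}^Q_1 - \tilde{W}^Q_2)(W^V)^\top = 0$ by $(W^V)^{-\top}$. Your extras are also correct: the justification that $((W^V)^\top)^{-1} = ((W^V)^{-1})^\top$, the $S = \tilde S$ consequence, and the remark on what fails without full rank, which matches the paper's multi-head subspace condition.
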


\begin{theorem}[Multi-Head Equivalence, $N_h \geq 1$]
\label{thm:multi_head}
Let $N_h \geq 1$, $d_k = d / N_h$, and for each head
$h \in \{1, \dots, N_h\}$, let
$W^Q_h, W^K_h, W^V_h \in \mathbb{R}^{d \times d_k}$
with $W^V_h$ of full column rank $d_k$.
Suppose the existence condition holds for each head $h$:
\begin{equation}
    \Omega_h = W^Q_h(W^K_h)^\top, \quad \mathrm{col}\bigl(\Omega_h^\top\bigr)
    \subseteq \mathrm{col}(W^V_h).
    \label{eq:existence_condition}
\end{equation}
Then for each head $h$ independently, there exists
$\tilde{W}^Q_h \in \mathbb{R}^{d \times d_k}$ satisfying:
\begin{equation}
    \tilde{W}^Q_h(W^V_h)^\top = W^Q_h(W^K_h)^\top.
    \label{eq:multi_head_equiv}
\end{equation}
\end{theorem}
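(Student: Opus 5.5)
We need $\tilde W^Q_h (W^V_h)^\top = \Omega_h$, which transposes to $W^V_h (\tilde W^Q_h)^\top = \Omega_h^\top$. The plan is to solve this equation column by column, using the existence condition, and then give an explicit formula via the Moore--Penrose pseudoinverse. Since the heads are separate and $\tilde W^Q_h$ only appears in head $h$'s equation, it suffices to fix one head $h$ throughout.

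\textbf{Column-wise solvability.} Let $\omega_1,\dots,\omega_d$ be the columns of $\Omega_h^\top \in \mathbb{R}^{d\times d}$. By the existence condition \eqref{eq:existence_condition}, each $\omega_j \in \mathrm{col}(W^V_h)$. Hence there is a $y_j \in \mathbb{R}^{d_k}$ with $W^V_h y_j = \omega_j$. Because $W^V_h$ has full column rank $d_k$, it is injective, so $y_j$ is unique. Stack these as $Y = [y_1,\dots,y_d] \in \mathbb{R}^{d_k\times d}$, so that $W^V_h Y = \Omega_h^\top$. Then set $\tilde W^Q_h := Y^\top \in \mathbb{R}^{d\times d_k}$; transposing gives \eqref{eq:multi_head_equiv}.

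\textbf{Explicit formula.} Full column rank makes $(W^V_h)^\top W^V_h \in \mathbb{R}^{d_k\times d_k}$ invertible, so the left inverse $(W^V_h)^+ = \bigl((W^V_h)^\top W^V_h\bigr)^{-1}(W^V_h)^\top$ exists. Define
\[
\tilde W^Q_h = \Omega_h\, W^V_h \bigl((W^V_h)^\top W^V_h\bigr)^{-1}.
\]
To verify it, note that $P = W^V_h (W^V_h)^+$ is the orthogonal projector onto $\mathrm{col}(W^V_h)$. The existence condition gives $P\Omega_h^\top = \Omega_h^\top$. Therefore $\tilde W^Q_h (W^V_h)^\top = (P\Omega_h^\top)^\top = \Omega_h$. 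Uniqueness also follows, by the same injectivity argument as above. Since every head is handled independently, applying this to each $h$ proves the theorem. As a sanity check, when $N_h=1$ and $d_k=d$, the existence condition holds automatically and the formula reduces to Theorem~\ref{thm:single_head}.

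\textbf{Main obstacle.} There is no real difficulty here. The only subtle step is making sure the existence condition is stated in the correct, transposed orientation: the columns of $\Omega_h^\top$, i.e.\ the rows of $\Omega_h$, must lie in $\mathrm{col}(W^V_h)$. That is precisely the requirement for the transposed equation $W^V_h(\tilde W^Q_h)^\top = \Omega_h^\top$ to be consistent. I would double-check this orientation with the dimension count $\Omega_h^\top \in \mathbb{R}^{d\times d}$ and $W^V_h \in \mathbb{R}^{d\times d_k}$.
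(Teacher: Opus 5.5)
Your proof is correct and follows essentially the same route as the paper: the paper also sets $\tilde W^Q_h = \Omega_h W^V_h\bigl((W^V_h)^\top W^V_h\bigr)^{-1}$, writing it as $\Omega_h B_h^+$ with $B_h = (W^V_h)^\top$, and verifies $\tilde W^Q_h (W^V_h)^\top = \Omega_h P_{W^V_h} = \Omega_h$ using the orthogonal projector and the existence condition; your column-by-column argument is a valid second existence proof on top of that. Your uniqueness remark is also right and sharpens the paper's closing comment: since $W^V_h$ is injective, any $N$ with $N(W^V_h)^\top = 0$ must vanish, so the paper's ``general solution'' $\Omega_h B_h^+ + N$ reduces to the single solution $\Omega_h B_h^+$.
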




The proofs of the theorems are provided in the supplementary material. The theorems establish that the two mechanisms are equivalent in the single-head attention setting. For multi-head attention, the equivalence requires an additional subspace condition, which is not guaranteed in general. We emphasize that Keyless Attention remains a valid and independently defined attention mechanism regardless of whether this equivalence condition is satisfied. The equivalence results merely identify a regime where Keyless Attention and standard attention share the same expressive capacity. Empirically, Section~\ref{sec:results} demonstrates that Keyless Attention achieves comparable predictive performance across five models and four architectures.

\subsection{Keyless Cross-Attention and Multimodal Attention}
Cross-attention extends self-attention by allowing one sequence to attend to another. This mechanism is central to encoder--decoder architectures and multimodal models. We can show that Keyless Attention is easily generalizable to to a cross-attention context as well. 

Let $X^{(a)} \in \mathbb{R}^{N_a \times d}$ and $X^{(b)} \in \mathbb{R}^{N_b \times d}$ denote two input sequences. The proposed keyless attention can be generalized to cross-attention in encoder-decoder and multi-modal models: 
\begin{align}
\begin{split}
&\text{CrossAttn}(X^{(a)}, X^{(b)}) \\
=& \text{softmax}\!\left( \frac{(X^{(a)} W^Q W^R)(X^{(b)} W^V)^{\top}}{\sqrt{d_k}} \right)(X^{(b)} W^V),
\end{split}
\end{align}
where $\Omega=W^QW^R(W^V)^{\top}$ contains 3 learnable weight matrices. 

\section{Value-only Cache in Autoregressive Inference}

Transformer-based autoregressive language models employ a Key-Value (KV) cache during inference \citep{pope2023efficiently, kwon2023efficient}. 
The cache grows linearly with sequence length and is maintained per attention layer, making memory consumption a critical bottleneck in long-context and large-scale deployments. To reduce the KV cache memory requirement, we propose to only store value representations from previous steps. At step $n+1$, let the new query be denoted as $q_{n+1}$ and the new value be denoted as $v_{n+1}.$  The matrix V includes value representations cached from previous steps and obtained from current step. The $i$th row of V matrix is $v_i,i=1,\ldots n+1$. Then, the output embedding is computed using the Keyless Attention computation as follows:  $$O_{n+1}=\mathrm{Attention}\left(q_{n+1},\ V\right)=\sum_{i=1}^{n+1} \alpha_{n+1,i} v_i,$$ where
$$
\alpha_{n+1,i}=\frac{\text{exp}\{\frac{\lambda <q_{n+1},v_i>}{\sqrt{d_k}}\}}{\sum_{i'=1}^{n+1} \text{exp}\{\frac{\lambda <q_{n+1},v_{i'}>}{\sqrt{d_k}}\}},
$$ and $<.>$ denotes the dot product between two vectors.



The proposed method introduces a Value-Only Cache that eliminates the need to construct and store key representations. Compared to conventional KV caching, this reduces both memory footprint and memory access overhead by 50\%. Furthermore, the Value-Only Cache is orthogonal to existing KV cache reduction methods such as Multi-Query Attention~\citep{shazeer2019fast}, Grouped-Query Attention~\citep{ainslie2023gqa}, and Multi-Head Latent Attention~\citep{deepseekv2}, and can be combined with them to achieve further memory reduction.

\paragraph{Multi-Head Keyless Attention (MHA) with Value-only Cache:}
For each head $h$, we construct $Q_h=XW_h^QW_h^R,$ and $V_h=XW_h^V.$
The attention output from head $h$ is
$${\mathrm{Attn}}_h\left(Q_h,V_h\right)=\mathrm{softmax}\left(\frac{Q_hV_h^{\top}}{\sqrt{d_k} }\right)V_h,$$
and the outputs from all the heads are concatenated to form the overall output states.
Each head stores its own $V_h$ for Value-only cache. 
   
\paragraph{Multi-Query Keyless Attention (MQA) with Value-only Cache:}
Each head has separate $Q_h=XW_h^QW_h^R$ and all heads share the same $V=XW^V.$ For each head,  ${\mathrm{Attn}}_h\left(Q_h,V\right)=\mathrm{softmax}\left(\frac{Q_hV^{\top}}{\sqrt{d_k}}\right)V$ is computed and concatenated. All heads share the same Value-only cache $V.$

\paragraph{Grouped Query Keyless Attention (GQA) with Value-only Cache:}
We use $G$ Value groups, where $1<G<H.$ 
Each head has separate $Q_h=XW_h^QW_h^R$ and all the heads in the same group share the same $V_g=XW_g^V$. Each head computes ${\mathrm{Attn}}_h\left(Q_h,V_g\right)=\mathrm{softmax}\left(\frac{Q_hV_g^{\top}}{\sqrt{d_k}}\right)V_g.$
All heads in the same group share the same Value-only cache $V_g$.

\begin{table*}[h]
\small
\centering
\begin{tabular}{llccccc}
\toprule
Depth & Method & Best Val Loss $\downarrow$ & Best PPL $\downarrow$
      & Epoch & Final Loss $\downarrow$ & $\Delta$ Overfit \\
\midrule
\multirow{2}{*}{12-layer}
& QKV    & \textbf{3.5180 $\pm$ 0.0005} & \textbf{33.71 $\pm$ 0.02}
         & 6 & 3.5958 $\pm$ 0.0015 & +0.0778 \\
& QVV(3) & 3.5216 $\pm$ 0.0083 & 33.84 $\pm$ 0.28
         & 6 & \textbf{3.5734 $\pm$ 0.0101} & \textbf{+0.0519} \\
\midrule
\multirow{2}{*}{36-layer}
& QKV    & 3.5116 $\pm$ 0.0007 & 33.50 $\pm$ 0.03
         & 4 & 4.3520 $\pm$ 0.0154 & +0.8404 \\
& QVV(3) & \textbf{3.5044 $\pm$ 0.0053} & \textbf{33.26 $\pm$ 0.18}
         & 4 & \textbf{4.1910 $\pm$ 0.0148} & \textbf{+0.6866} \\
\bottomrule
\end{tabular}
\caption{QKV vs.\ QVV(3) across GPT-2 model depths (mean $\pm$ std,
three seeds). $\Delta$Overfit: increase in validation loss from the
best epoch to the final epoch.}
\label{tab:ablation}
\end{table*}

\begin{table*}[h]
\small
\centering
\begin{tabular}{lcccc}
\toprule
Dataset & QKV Acc & QVV(3) Acc & KV Cache (MB) & QVV(3) Cache (MB) \\
\midrule
HellaSwag
  & $0.2460 \pm 0.0019$ & $\mathbf{0.2550 \pm 0.0007}$
  & 5.569  & 2.784  \\
ARC-Challenge
  & $0.2352 \pm 0.0043$ & $\mathbf{0.2355 \pm 0.0063}$
  & 12.881 & 6.441  \\
BoolQ
  & $0.6190 \pm 0.0030$ & $\mathbf{0.6226 \pm 0.0002}$
  & 50.288 & 25.144 \\
SciQ
  & $\mathbf{0.3563 \pm 0.0033}$ & $0.3300 \pm 0.0043$
  & 5.737  & 2.869  \\
StoryCloze
  & $0.5398 \pm 0.0059$ & $\mathbf{0.5584 \pm 0.0039}$
  & 11.869 & 5.934  \\
\bottomrule
\end{tabular}
\caption{Zero-shot accuracy (mean $\pm$ std, three seeds) and KV
cache memory for the GPT-2 36-layer (557M) model. QVV(3) matches
or improves QKV on 4 of 5 benchmarks while reducing cache memory
by 50\%.}
\label{tab:combined}
\end{table*}

\paragraph{Multihead with Latent Value-only Cache:}

Let $Q_h = XW_h^{Q}W_h^{R}$ and
$V_h = XW^{V}_{h,\mathrm{pre}}W^{V}_{h,\mathrm{post}}$, where $W^{V}_{h,\mathrm{pre}}\in \mathbb{R}^{d \times r}$  and $W^V_{h,\mathrm{post}}\in \mathbb{R}^{r \times d_k}$. The latent cache representation is $V^* = XW^V_{h,\mathrm{pre}}$ and the absorbed query is $Q_h^* = Q_h(W^V_{h,\mathrm{post}})^\top$. For each head, the output is: 
\begin{equation}
    \mathrm{softmax}\!\left(\frac{Q_h^*(V^*)^\top}{\sqrt{r}}\right)
    V^* W^V_{h,\mathrm{post}}.
\end{equation}
Only $V^*$ needs to be stored in the cache, reducing memory from $\mathcal{O}(nd_k)$ to $\mathcal{O}(nr)$ per head. The compression and expansion projections can each independently be head-specific, group-specific, or shared across all heads.


\section{Experimental Setup}
\label{sec:setup}
In our experiments, we compare the performance of the proposed Keyless Attention method (denoted as QVV(3)) against the standard QKV
attention. We evaluate both methods across five models and four architectures. For the primary comparison, we train GPT-2
style~\citep{radford2019language} 12-layer (280M) and 36-layer (557M) decoder-only Transformers on WikiText-103~\citep{merity2017pointer}, using a 30M-token subset due to GPU memory constraints. To assess cross-architecture generalisation, we additionally evaluate on Pythia 410M~\citep{biderman2023pythia}, Qwen2 1.5B~\citep{qwen2}, and Llama 3.2 1B~\citep{llama3.2_2024}, covering three positional encoding schemes (learned absolute, partial RoPE, full RoPE), two residual designs (sequential, parallel), and two attention grouping strategies (MHA, GQA). Full architecture and implementation details are given in the supplementary material.
 
All models are implemented via Hugging Face Transformers~\citep{wolf-etal-2020-transformers} and trained from scratch with AdamW~\citep{loshchilov2019decoupled} ($\eta = 10^{-4}$, weight decay $= 0.01$), differing only in the attention mechanism. GPT-2 models use a linear learning rate schedule with 5\% warmup; Pythia 410M, Qwen2 1.5B, and Llama 3.2 1B use cosine decay with 5\% linear warmup. GPT-2 experiments report means and standard deviations over three random seeds. All experiments were conducted on a single NVIDIA A100-SXM4-80GB GPU.

We additionally evaluate zero-shot commonsense reasoning on five benchmarks using the 36-layer GPT-2 model: HellaSwag~\citep{zellers2019hellaswag}, ARC-Challenge~\citep{allenai:arc}, StoryCloze~\citep{mostafazadeh-EtAl:2016:N16-1}, SciQ~\citep{welbl-etal-2017-crowdsourcing}, and BoolQ~\citep{clark2019boolq}. All training conditions, hyperparameters, and preprocessing pipelines are held identical across models, such that the attention mechanism is the sole variable. The primary goal is to isolate the effect of Keyless Attention relative to the standard baseline. 

\section{Results}
\label{sec:results}
\subsection{Training Dynamics and Validation Performance}

Figure~\ref{fig:qkv_QVV(3)_training_ppl} and Table~\ref{tab:ablation} summarize training dynamics and validation performance across depths of the GPT-2 architecture. Both methods exhibit similar optimization behavior with low variance across seeds, but differ in overfitting and generalization as depth increases. 

The 12-layer and 36-layer models are trained in FP32 and FP16, respectively, with three runs of 10 epochs each. The per-epoch training times are 29.3/29.6 minutes (QKV/QVV(3)) for the 12-layer model and 70.2/75.5 minutes (QKV/QVV(3)) for the 36-layer model, resulting in approximately 102.30 GPU-hours of total training cost. It is worth noting that QVV(3) requires slightly more training time per epoch than QKV due to the additional optimization complexity introduced by value-space routing.

In the 12-layer setting, QKV achieves a slightly lower best validation loss (3.5180 vs.\ 3.5216) and perplexity (33.71 vs.\ 33.84), differences that are minimal relative to metric scale. QVV(3) is more robust in later epochs, with a smaller overfitting gap (+0.0519 vs.\ +0.0778) and lower final validation loss.
In the 36-layer setting, QVV(3) outperforms QKV across all metrics: lower best validation loss (3.5044 vs.\ 3.5116), lower perplexity (33.26 vs.\ 33.50), and substantially reduced overfitting (+0.6866 vs.\ +0.8404). A one-sided Welch's t-test on best validation loss shows no significant degradation for the 12-layer model (p=0.266) and a trend toward improved validation loss for the 36-layer model (p=0.070). Notably, QVV(3) consistently demonstrates lower validation loss compared to QKV after the best epoch. These results suggest that QVV(3) preserves expressivity while improving the robustness of the model.

\begin{figure*}[h]
    \centering
    \includegraphics[width=0.6\linewidth]
    {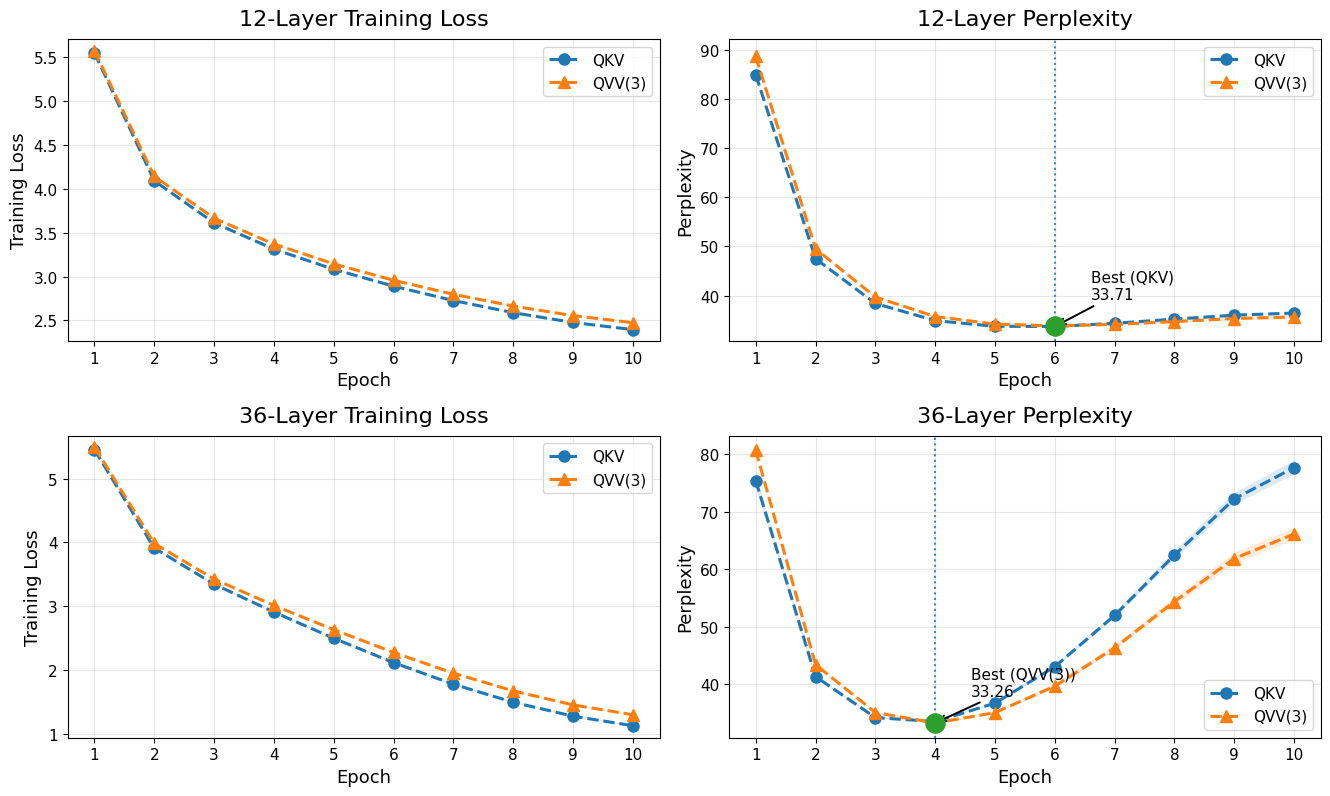}
   \caption{Training dynamics of QKV and QVV(3) across GPT-2 model
depths. Top row: 12-layer (280M parameters); bottom row: 36-layer (557M parameters). Left: training loss; right: validation perplexity. Curves are means over three seeds; shaded regions indicate one standard deviation.}
    \label{fig:qkv_QVV(3)_training_ppl}
\end{figure*}

\begin{figure*}[h]
    \centering
    \begin{subfigure}[b]{0.32\linewidth}
        \centering
        \includegraphics[width=\linewidth]{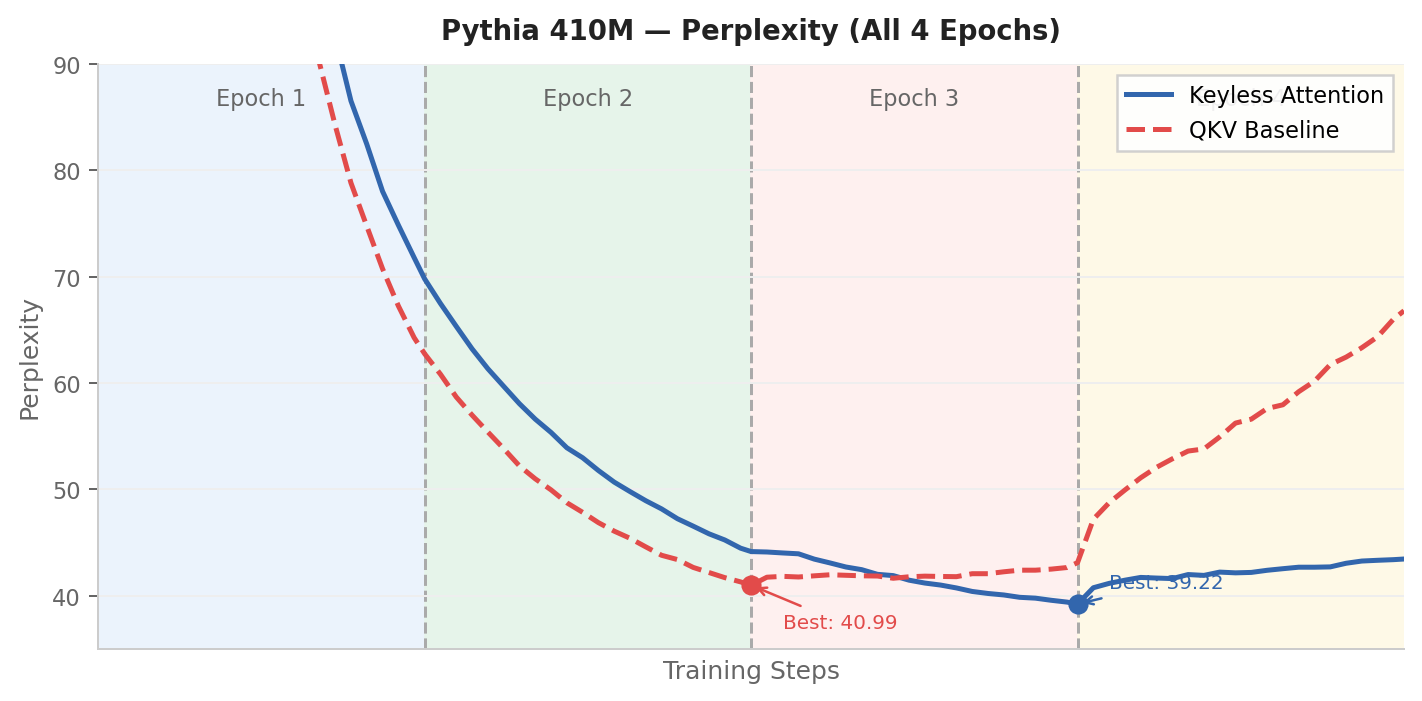}
        \caption{Pythia 410M. Keyless: 39.22 vs.\ QKV: 40.99.}
        \label{fig:pythia_ppl}
    \end{subfigure}
    \hfill
    \begin{subfigure}[b]{0.32\linewidth}
        \centering
        \includegraphics[width=\linewidth]{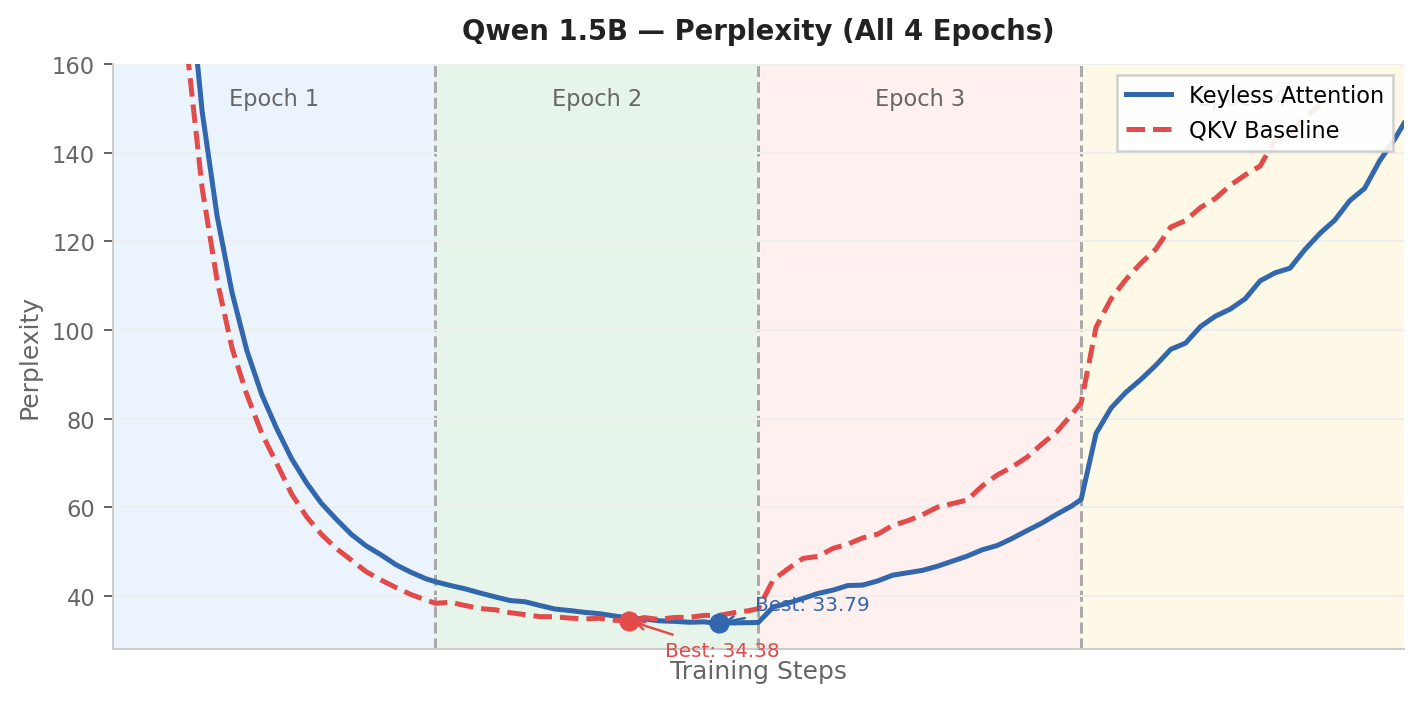}
        \caption{Qwen2 1.5B. Keyless: 33.79 vs.\ QKV: 34.38.}
        \label{fig:qwen_ppl}
    \end{subfigure}
    \hfill
    \begin{subfigure}[b]{0.32\linewidth}
        \centering
        \includegraphics[width=\linewidth]{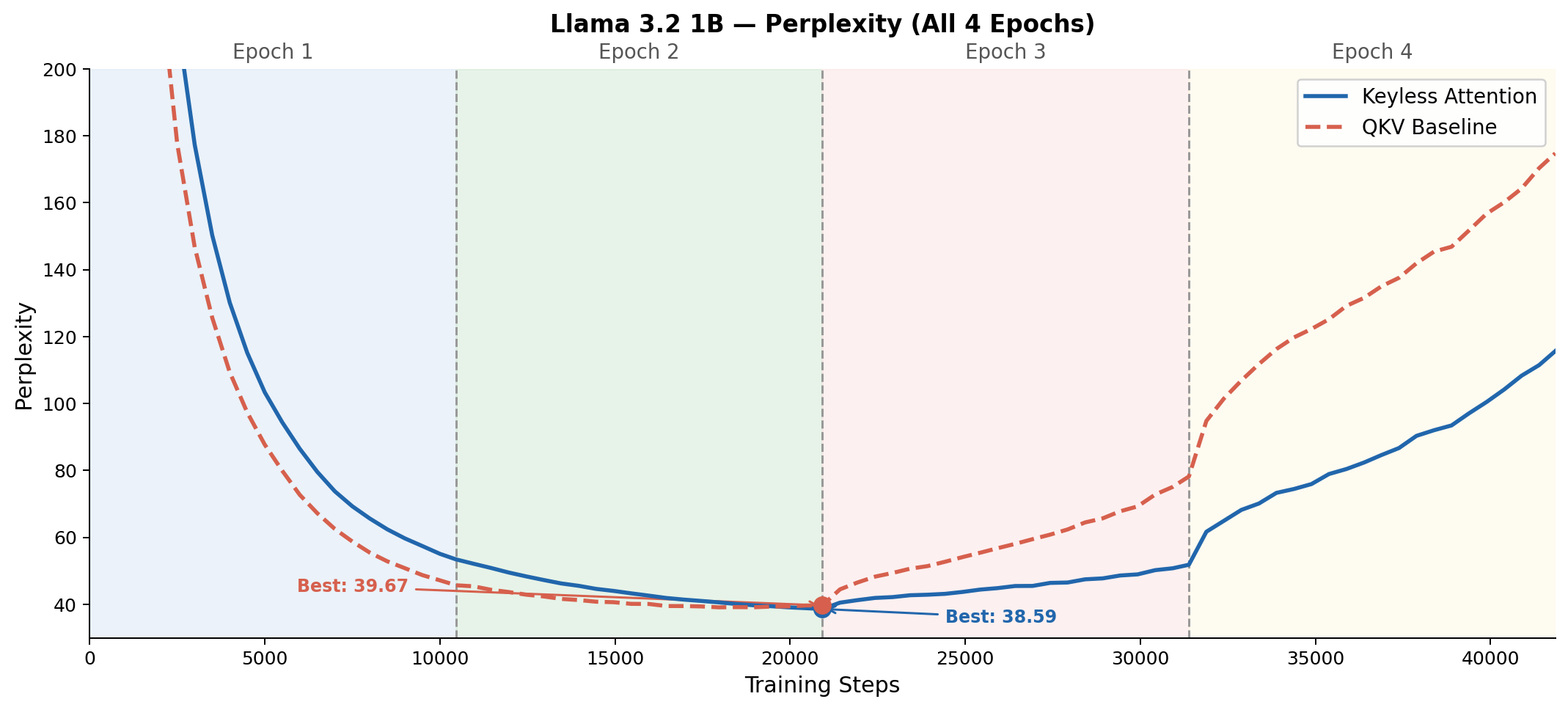}
        \caption{Llama 3.2 1B. Keyless: 38.59 vs.\ QKV: 39.09.}
        \label{fig:llama_ppl}
    \end{subfigure}
    \caption{Perplexity over 4 epochs across three GQA architectures. We observe that Keyless Attention matches or outperforms standard QKV attention at the best epoch in all three models, and degrades more slowly after the best epoch, indicating greater robustness to overfitting.}
    \label{fig:ppl_curves_combined}
\end{figure*}

\begin{figure*}[h]
\centering
\includegraphics[width=0.6\linewidth]{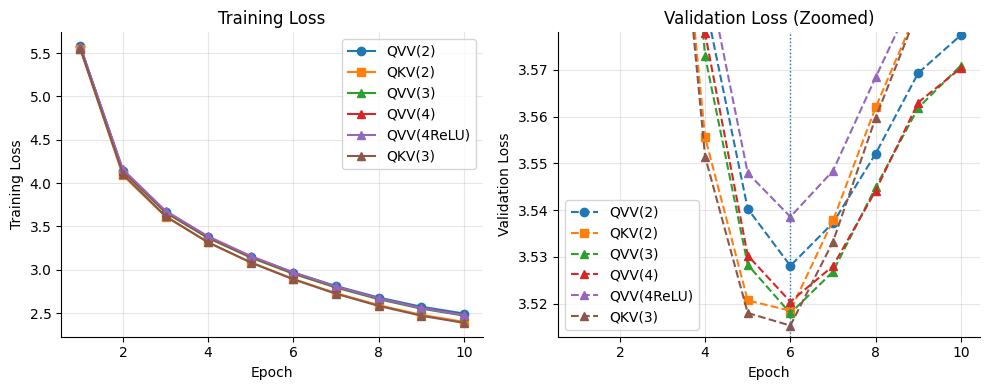}
\caption{Validation loss comparison of QVV and QKV variants across
factorization depths $m$ (12-layer GPT-2 model, WikiText-103).}
\label{fig:compare}
\end{figure*}

\begin{figure*}[t]
    \centering
    \includegraphics[width=0.6\textwidth]{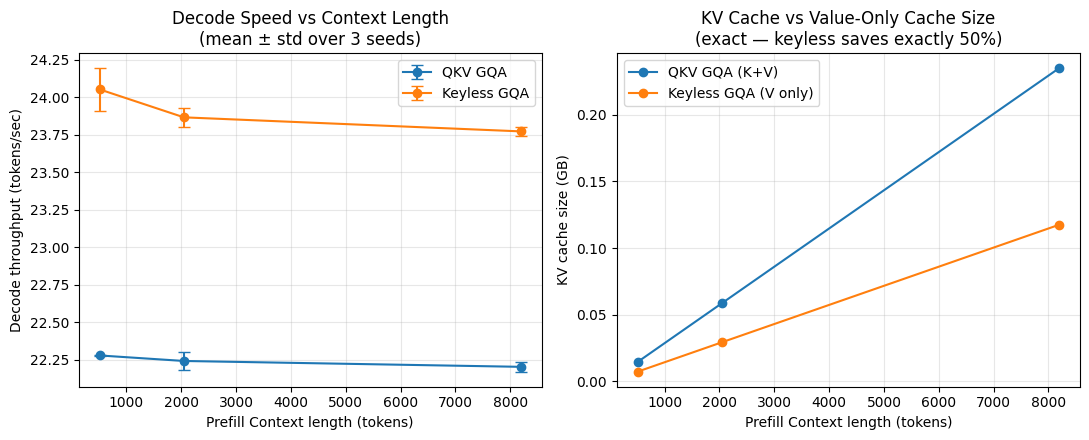}
    \caption{\textbf{Decode throughput and KV cache size for Keyless vs.\ QKV attention under GQA.} \textit{Left:} decode throughput (mean $\pm$ std over 3 seeds) as a function of context length; Keyless exceeds QKV. \textit{Right:} KV cache size vs Value-Only Cache size; Keyless reduces cache memory by exactly 50\% at every context length, since it stores only value states.}
    \label{fig:inference_efficiency}
\end{figure*}

\subsection{Downstream Task Performance}

Table~\ref{tab:combined} reports zero-shot accuracy on five benchmarks using the 36-layer GPT-2 model.
A one-sided Welch's t-test shows that QVV(3) significantly improves performance on HellaSwag (p=0.004) and StoryCloze (p=0.007), while achieving comparable performance on ARC-Challenge (p=0.475) and BoolQ (p=0.086). QKV remains stronger on SciQ (p<0.001). Overall, QVV(3) maintains competitive downstream performance while reducing KV-cache memory by 50\%.


\subsection{Ablation: Factorization Depth $m$}

Let QVV(m) denote a Keyless Attention with attention factorization depth $m$. Figure~\ref{fig:compare} compares QVV(2), QVV(3), QVV(4), QVV(4ReLU),
QKV(2), and QKV(3) on the 12-layer GPT-2 model. The method QVV(2) refers to the KV-sharing method \citep{edward2026qv, kayyam2026transformers,team2026gemma} with no value-space routing matirix $W^R.$ QKV(2) refers to the standard attention, whereas QKV(3) refers to the standard attention method with one extra projection matrix for the query representations.
QVV(4ReLU) performs worst, confirming that a nonlinear ReLU insertion
disrupts the implicit regularization of the linear factorization.
QVV(2) achieves worse validation loss than the QKV(2) baseline,
while QVV(3) and QVV(4) match QKV(2) and QKV(3) at best validation loss.
After the best epoch, QKV(2) and QKV(3) exhibit more pronounced
overfitting than QVV(3) and QVV(4). 
Since QVV(3) and QVV(4) perform comparably while QVV(4) requires one
additional weight matrix, QVV(3) offers the best efficiency--performance
trade-off for this dataset. This ablation study demonstrates that Keyless Attention with the value-routing matrix $W^R$ achieves better performance than KV-sharing method with key and value sharing the same representation.

\subsection{Comparison on Pythia, Qwen2, and Llama 3.2 Architectures}
\label{sec:pythia_qwen}

To assess cross-architecture generalization, we evaluate on Pythia 410M~\citep{biderman2023pythia}, Qwen2 1.5B~\citep{qwen2}, and Llama 3.2 1B~\citep{llama3.2_2024},
using the same dataset and optimiser as the GPT-2 experiments, with a cosine decay schedule with 5\% linear warmup. 

\paragraph{Pythia 410M.}
Pythia is a 24-layer decoder-only model (hidden size 1{,}024, 16 heads, head dim 64) with partial
RoPE~\citep{su2024roformer} and a parallel residual connection. Keyless Attention achieves a best validation loss of 3.6692 (PPL $= 39.22$) versus 3.7133 (PPL $= 40.99$) for the baseline. This is a reduction of 0.0441 in loss and 1.77 in perplexity. Notably, Keyless Attention continues to improve through epoch~3 while the baseline plateaus after epoch~2, suggesting that the gradient entanglement between $W^{R}$ and $W^V$ introduced by value-space routing slows convergence but ultimately reaches a lower minimum (Figure~\ref{fig:pythia_ppl}).

\paragraph{Qwen2 1.5B.}
Qwen2 employs GQA~\citep{ainslie2023gqa} with 12 query heads and 2 KV heads ($6\times$ grouping ratio). Our keyless implementation uses 2 value heads following the GQA format, with $W^{Q_2}$ made head-specific to preserve query-head-specific routing. Keyless Attention achieves a best validation loss of 3.5201 (PPL $= 33.79$) versus 3.5376 (PPL $= 34.38$) for the baseline, which is a reduction of 0.0175 in loss and 0.59 in perplexity.
Standard attention achieves a lower validation loss in epoch~1, but Keyless Attention closes the gap by epoch~2 and reaches a better best checkpoint. From epoch~3 onward, both models overfit while Keyless Attention maintains a consistently lower validation loss than the baseline (Figure~\ref{fig:qwen_ppl}).

\paragraph{Llama 3.2 1B.}
Llama 3.2 1B~\citep{llama3.2_2024} is a decoder-only model with 16 layers, hidden size 2{,}048, employing GQA with 32 query heads and 8 KV heads ($4\times$ grouping ratio) and full RoPE positional embeddings.
Our keyless implementation follows the same GQA adaptation as Qwen2, with value heads matching the KV head count and $W^{Q_2}$ made head-specific.
Keyless Attention achieves a best validation loss of \textbf{3.6529} (PPL $= 38.59$) versus 3.6805 (PPL $= 39.09$) for the baseline, forming a reduction of 0.0276 in loss and 0.50 in perplexity. Standard attention converges faster in epoch~1, but Keyless Attention surpasses it by the end of epoch~2 and maintains a consistently lower validation loss throughout the overfitting regime from epoch~3 onward.
(Figure~\ref{fig:llama_ppl}).

\subsection{Inference Efficiency}
\label{sec:inference_efficiency}

We benchmark decode-time throughput and KV cache memory for Keyless attention against standard QKV attention under grouped-query attention (GQA), using the Qwen2-1.5B architecture (28 layers, 2 KV heads) as a testbed. 
For Keyless Attention, we exploit the fact that its two sequential query projections, $W^{Q}$ and $W^{R}$, can be fused into a single matrix prior to inference. This eliminates any additional projection cost relative to QKV's single query projection. We measure decode throughput at prefill context lengths of $512$, $2048$, and $8192$ tokens (batch size 1, 256 generated tokens per run), averaged over 3 random seeds. As shown in Figure~\ref{fig:inference_efficiency} (left panel), Keyless Attention exceeds QKV decode throughput across all tested context lengths, consistent with Keyless eliminating the key projection ($d \times d_{kv}$) at no additional cost from the fused query projection. Figure~\ref{fig:inference_efficiency} (right panel) shows the corresponding KV cache size: since Keyless stores only value states, it reduces cache memory by exactly 50\% at every context length (e.g., $0.118$ vs.\ $0.236$ GB at 8192 tokens). We expect the speedup to grow substantially at larger batch sizes, where decoding becomes memory-bandwidth-bound and KV cache reads dominate per-step latency.

\section{Evaluation under Industrial Pretraining Conditions}
\label{sec:downstream}

\paragraph{Experimental setup.}

The experiments above evaluate Keyless
Attention across five models and four architectures on WikiText-103
(${\approx}$30M tokens), training for multiple epochs to observe
convergence behavior.
Here we conduct a complementary experiment at larger scale, training
on a 1B-token subset of FineWeb-Edu~\citep{penedo2024fineweb}, a
high-quality filtered web corpus curated for educational content.
At this scale, a single epoch already exhausts the training budget,
more closely resembling the industrial pretraining setting where
models are trained on massive datasets for one pass rather than
iterated over smaller corpora.
This experiment therefore examines the performance of Keyless
Attention under a more realistic pretraining regime.

The training set comprises approximately 980M tokens following a
98/1/1 train/validation/test split, with non-overlapping windows
of 1,024 tokens and a batch size of 2 sequences with gradient
accumulation of 32 steps (effective batch size of 65,536 tokens
per optimizer step).
Both models use the Qwen2.5-3B architecture
($d_{\text{hidden}}=2048$, 16 attention heads, 2 KV heads,
36 layers) trained with the AdamW optimizer
($\text{LR}=10^{-4}$, cosine decay to zero, 5\% warmup,
weight decay $0.01$) for one full epoch.
The QKV baseline uses the standard Qwen2ForCausalLM attention
implementation; Keyless Attention replaces the key projection
with a head-specific value-space routing projection $W^{R}$ with all other hyperparameters
identical.
At the end of training, QKV achieves a validation perplexity of
$24.83$ and Keyless Attention achieves $25.80$, a gap of $0.97$
PPL ($3.9\%$ relative).

We evaluate both models on a suite of six downstream benchmarks
spanning commonsense reasoning, reading comprehension, factual
knowledge, and long-range coherence, plus out-of-distribution
language modelling perplexity on WikiText-103~\citep{merity2017pointer}.
All multiple-choice tasks are evaluated via log-probability scoring
with no prompt tuning, following standard zero-shot evaluation
practice~\citep{brown2020language}.

\paragraph{Comparison Results.}
Table~\ref{tab:downstream} reports the results.
On out-of-distribution language modelling (WikiText-103 perplexity),
QKV outperforms Keyless Attention ($55.20$ vs $57.73$ PPL), and the in-distribution validation
perplexities at the end of training ($24.83$ vs $25.80$ PPL,
a gap of $0.97$ PPL).

On downstream accuracy tasks, the results are evenly split:
Keyless Attention outperforms QKV on HellaSwag ($0.240$ vs $0.230$),
OpenBookQA ($0.145$ vs $0.115$), and BoolQ ($0.670$ vs $0.620$),
while QKV outperforms Keyless Attention on WinoGrande
($0.525$ vs $0.475$), ARC-Easy ($0.440$ vs $0.435$), and
LAMBADA ($0.198$ vs $0.180$).
Averaging across the six accuracy tasks, Keyless Attention achieves
marginally higher mean accuracy ($0.359$ vs $0.355$).

\begin{table}[h]
\centering
\small
\begin{tabular}{lccr}
\toprule
Task & QKV & Keyless & Winner \\
\midrule
WikiText-103 PPL $\downarrow$ & 55.20 & 57.73 & QKV \\
\midrule
HellaSwag        & 0.230 & 0.240 & Keyless \\
OpenBookQA       & 0.115 & 0.145 & Keyless \\
BoolQ            & 0.620 & 0.670 & Keyless \\
WinoGrande       & 0.525 & 0.475 & QKV \\
ARC-Easy         & 0.440 & 0.435 & QKV \\
LAMBADA          & 0.198 & 0.180 & QKV \\
\midrule
Accuracy average & 0.355 & 0.359 & Keyless \\
\bottomrule
\end{tabular}
\caption{
Zero-shot downstream evaluation of QKV baseline and Keyless Attention
on Qwen2.5-3B trained from scratch on 1B tokens of FineWeb-Edu,
evaluated at final best checkpoints (full training run, matched
token count).
}
\label{tab:downstream}
\end{table}
\begin{figure*}[h]
    \centering
    \includegraphics[width=0.6\linewidth]
    {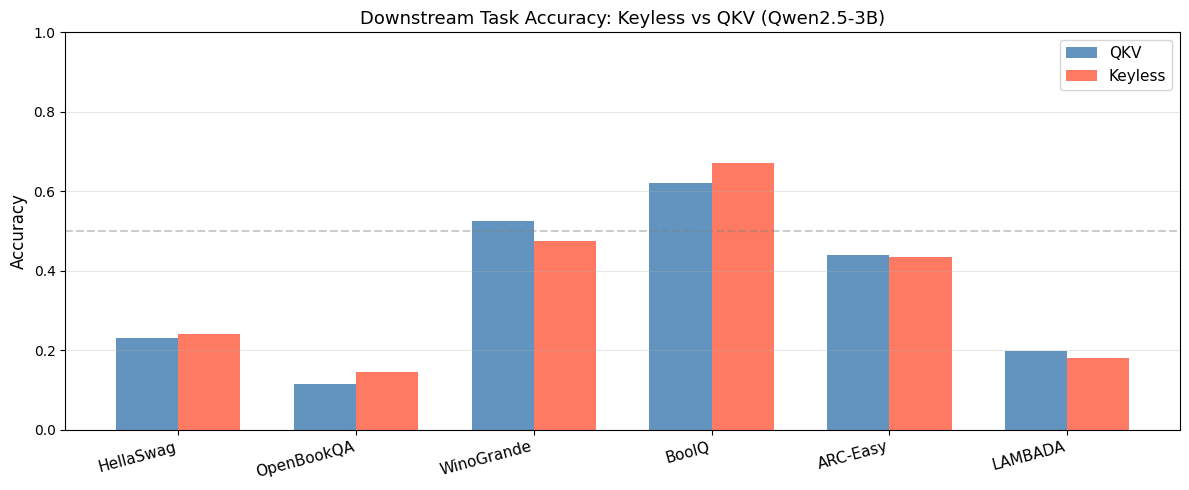}
   \caption{Zero-shot downstream task accuracy of Keyless Attention and
standard QKV attention on Qwen2.5-3B trained from scratch on 1B tokens
of FineWeb-Edu. Both models evaluated at final best checkpoints.
}
    \label{fig:1B}
\end{figure*}

\begin{figure*}[h]
    \centering
    \includegraphics[width=0.8\linewidth]
    {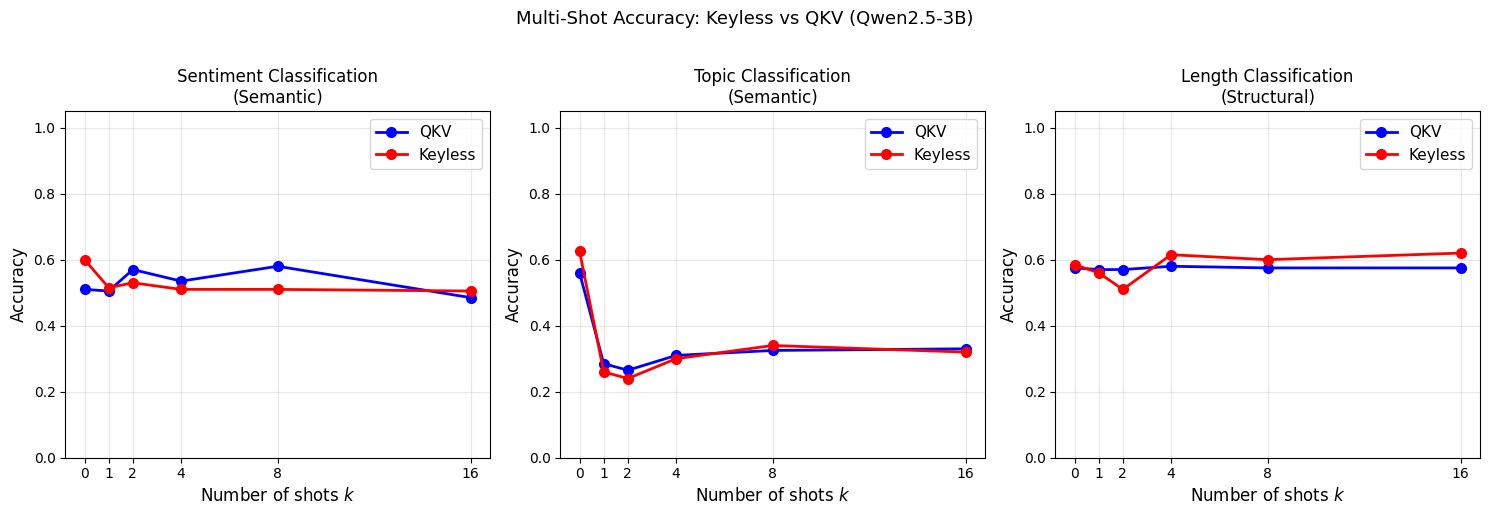}
   \caption{Multi-shot accuracy of Keyless Attention and standard QKV
attention (Qwen2.5-3B, 1B tokens of FineWeb-Edu) across $k \in
\{0, 1, 2, 4, 8, 16\}$ demonstrations on sentiment classification
(SST-2), topic classification (AGNews), and length classification.
Both models exhibit accuracy degradation from $k{=}0$ to $k{=}1$,
consistent with models trained on raw web text without
instruction-following data.}
    \label{fig:1Bmultipleshot}
\end{figure*}

\begin{table*}[h]
\centering
\small
\setlength{\tabcolsep}{4pt}
\begin{tabular}{lcccccccccccc}
\toprule
& \multicolumn{2}{c}{$k=0$}
& \multicolumn{2}{c}{$k=1$}
& \multicolumn{2}{c}{$k=2$}
& \multicolumn{2}{c}{$k=4$}
& \multicolumn{2}{c}{$k=8$}
& \multicolumn{2}{c}{$k=16$} \\
\cmidrule(lr){2-3}\cmidrule(lr){4-5}\cmidrule(lr){6-7}
\cmidrule(lr){8-9}\cmidrule(lr){10-11}\cmidrule(lr){12-13}
Task & Q & KL & Q & KL & Q & KL & Q & KL & Q & KL & Q & KL \\
\midrule
Sentiment & .510 & \textbf{.600} & .505 & \textbf{.515}
          & \textbf{.570} & .530 & \textbf{.535} & .510
          & \textbf{.580} & .510 & .485 & \textbf{.505} \\
Topic     & .560 & \textbf{.625} & \textbf{.285} & .260
          & \textbf{.265} & .240 & \textbf{.310} & .300
          & .325 & \textbf{.340} & \textbf{.330} & .320 \\
Length    & .575 & \textbf{.585} & \textbf{.570} & .560
          & \textbf{.570} & .510 & .580 & \textbf{.615}
          & .575 & \textbf{.600} & .575 & \textbf{.620} \\
\bottomrule
\end{tabular}
\caption{Multi-shot accuracy for QKV (Q) and Keyless (KL) on
sentiment, topic, and length classification.
Bold indicates the higher value per cell.
Both models show accuracy degradation from $k{=}0$ to $k{=}1$,
consistent with models not yet trained on instruction-following data.}
\label{tab:multishot}
\end{table*}

We observe that the tasks on which Keyless Attention wins are predominantly semantic in nature: HellaSwag
requires predicting plausible activity continuations based on
semantic coherence; OpenBookQA tests science knowledge grounded
in semantic associations; BoolQ requires reading comprehension
where the relevant signal is the semantic relationship between
passage and question. These tasks may benefit from the value-space routing of $W^R.$

Tasks on which QKV wins have stronger structural or syntactic components:
WinoGrande requires pronoun coreference resolution, which depends
on syntactic structure rather than semantic content;
LAMBADA requires predicting the final word of a long passage
where long-range syntactic dependencies play a large role;
These tasks may benefit
from the independent routing capacity of $W^K$.

A notable finding is that Keyless Attention achieves competitive mean
downstream accuracy ($0.359$ vs $0.355$) despite higher perplexity
($25.80$ vs $24.83$ in-distribution; $57.73$ vs $55.20$ on
WikiText-103).
This suggests that value-space routing generalizes efficiently
to semantically oriented downstream tasks than perplexity alone
would predict.


\section{Multi-Shot Evaluation}
\label{sec:multishot}

We evaluate both models on three classification tasks under varying
numbers of few-shot examples $k \in \{0, 1, 2, 4, 8, 16\}$ provided
in the prompt, using the final best checkpoints after training on
1B tokens of FineWeb-Edu. The tasks comprise two semantically oriented tasks, binary
sentiment classification (SST-2,~\citep{socher2013recursive}) and
four-class news topic classification
(AGNews,~\citep{zhang2015character}), and one structural task, 
length classification (short/long by word count), where the label
depends on token count rather than semantic content.
Multiple-choice tasks are scored via next-token logit comparison
at the label position.

\paragraph{Results.}
Table~\ref{tab:multishot} and Figure~\ref{fig:1Bmultipleshot} report
the accuracy across all values of $k$.
A consistent pattern across all three tasks is a sharp drop from
$k{=}0$ to $k{=}1$, followed by partial recovery at larger $k$.
This behavior is characteristic of models trained on raw web text
without instruction-following data: the model has not learned the
meta-skill of pattern-matching from demonstrations, so additional
context initially disrupts rather than guides prediction.

At zero-shot ($k{=}0$), Keyless Attention outperforms QKV on all
three tasks: sentiment ($0.600$ vs $0.510$), topic ($0.625$ vs
$0.560$), and length ($0.585$ vs $0.575$). Beyond zero-shot, the results are noisy and no consistent trend
emerges.
On the sentiment task, QKV recovers better at $k{=}8$ ($0.580$
vs $0.510$), while Keyless leads at $k{=}0$ and $k{=}16$.
On the topic task, Keyless leads at $k{=}0$ and $k{=}8$, while
QKV is comparable or better at other shot counts.
Notably, on the structural length task, Keyless Attention improves
consistently from $k{=}0$ ($0.585$) to $k{=}16$ ($0.620$), while
QKV remains essentially flat ($0.575$--$0.580$ throughout).

The multi-shot results are less informative at this training stage:
the sharp k=0 to k=1 degradation suggests neither model has
developed reliable in-context learning capability after 1B tokens
of raw web text pretraining, making multi-shot comparisons noisy
and difficult to interpret theoretically. We note that robust multi-shot evaluation requires either instruction tuning or substantially more pretraining compute, neither of which
is available at the current training scale.

Across both zero-shot downstream benchmarks and multi-shot
evaluations, Keyless Attention achieves comparable task performance
to standard QKV attention, with neither method consistently
dominating.
The modest perplexity gap ($3.9\%$ relative) reflects the
convergence lag from gradient entanglement rather than a
fundamental quality difference.
These results support the viability of Keyless Attention in
industrial pretraining settings: the proposed method delivers
performance on par with standard attention while permanently
eliminating the key cache.

\section{Conclusion}

The motivation of value-space routing in Keyless Attention is to replace conventional key-space routing and eliminate key representations from the attention computation. As a result, our proposed method reduces KV-cache memory consumption by 50\%, while preserving its performance. Experiments across various models and architectures demonstrate
that Keyless Attention achieves comparable perplexity and downstream
task performance relative to standard attention, with neither method
consistently dominating. Furthermore, Keyless Attention exhibits slower degradation in validation loss after the best epoch compared with standard attention, indicating improved robustness against overfitting. Additional experiments show that Keyless Attention with dedicated value-space routing outperforms KV-sharing methods, highlighting the importance of incorporating a value-space routing projection into the attention computation. Experiments in the pretraining regime confirm the viability of
Keyless Attention in industrial settings. Although Keyless Attention requires slightly more computation during training, it provides substantial inference-time benefits through lower memory consumption and improved decoding efficiency. We believe Keyless Attention provides a promising new direction for designing memory-efficient Transformer architectures for large-scale language models. Future work will investigate the scaling behavior of Keyless Attention on larger datasets, longer context windows, and larger language models.



\newpage

\bibliographystyle{apalike}
\bibliography{custom}

@inproceedings{vaswani2017attention,
  author={Ashish Vaswani and Noam Shazeer and Niki Parmar and Jakob Uszkoreit and Llion Jones and Aidan N. Gomez and Lukasz Kaiser and Illia Polosukhin},
  title     = {Attention Is All You Need},
  booktitle = {Advances in Neural Information Processing Systems},
  year      = {2017}
}

@article{beltagy2020longformer,
  author  = {Beltagy, Iz and Peters, Matthew E. and Cohan, Arman},
  title   = {Longformer: The Long-Document Transformer},
  journal = {arXiv preprint arXiv:2004.05150},
  year    = {2020}
}

@article{child2019sparse,
  title={Generating Long Sequences with Sparse Transformers},
   author={Rewon Child and Scott Gray and Alec Radford and Ilya Sutskever},
  journal={arXiv preprint arXiv:1904.10509},
  year={2019}
}

@inproceedings{katharopoulos2020transformers,
  title     = {Transformers are {RNN}s: Fast Autoregressive Transformers with Linear Attention},
  author    = {Katharopoulos, Angelos and Vyas, Apoorv and Pappas, Nikolaos and Fleuret, Fran{\c{c}}ois},
  booktitle = {Proceedings of the 37th International Conference on Machine Learning (ICML)},
  pages     = {5156--5165},
  year      = {2020},
  volume    = {119},
  series    = {Proceedings of Machine Learning Research},
  publisher = {PMLR},
  url       = {https://proceedings.mlr.press/v119/katharopoulos20a.html}
}

@inproceedings{choromanski2021rethinking,
  title     = {Rethinking Attention with Performers},
  author    = {Choromanski, Krzysztof and Likhosherstov, Valerii and Dohan, David and Song, Xingyou and Gane, Andreea and Sarlos, Tamas and Hawkins, Peter and Davis, Jared and Mohiuddin, Afroz and Kaiser, Lukasz and Belanger, David and Colwell, Lucy and Weller, Adrian},
  booktitle = {International Conference on Learning Representations (ICLR)},
  year      = {2021},
  url       = {https://openreview.net/forum?id=Ua6zuk0WRH}
}

@article{wang2020linformer,
  title   = {Linformer: Self-Attention with Linear Complexity},
  author  = {Wang, Sinong and Li, Belinda Z. and Khabsa, Madian and Fang, Han and Ma, Hao},
  journal = {arXiv preprint arXiv:2006.04768},
  year    = {2020},
  url     = {https://arxiv.org/abs/2006.04768}
}

@inproceedings{brown2020language,
  title     = {Language Models are Few-Shot Learners},
  author    = {Brown, Tom B. and Mann, Benjamin and Ryder, Nick and Subbiah, Melanie and Kaplan, Jared and Dhariwal, Prafulla and Neelakantan, Arvind and Shyam, Pranav and Sastry, Girish and Askell, Amanda and Agarwal, Sandhini and Herbert-Voss, Ariel and Krueger, Gretchen and Henighan, Tom and Child, Rewon and Ramesh, Aditya and Ziegler, Daniel M. and Wu, Jeffrey and Winter, Clemens and Hesse, Christopher and Chen, Mark and Sigler, Eric and Litwin, Mateusz and Gray, Scott and Chess, Benjamin and Clark, Jack and Berner, Christopher and McCandlish, Sam and Radford, Alec and Sutskever, Ilya and Amodei, Dario},
  booktitle = {Advances in Neural Information Processing Systems (NeurIPS)},
  volume    = {33},
  pages     = {1877--1901},
  year      = {2020}
}

@inproceedings{kwon2023efficient,
  title     = {Efficient Memory Management for Large Language Model Serving with {PagedAttention}},
  author    = {Kwon, Woosuk and Li, Zhuohan and Zhuang, Siyuan and Sheng, Ying and Zheng, Lianmin and Yu, Cody Hao and Gonzalez, Joseph E. and Zhang, Hao and Stoica, Ion},
  booktitle = {Proceedings of the 29th Symposium on Operating Systems Principles (SOSP)},
  year      = {2023},
  url       = {https://arxiv.org/abs/2309.06180}
}

@inproceedings{prabhu2025vattention,
  title={vattention: Dynamic memory management for serving llms without pagedattention},
  author={Prabhu, Ramya and Nayak, Ajay and Mohan, Jayashree and Ramjee, Ramachandran and Panwar, Ashish},
  booktitle={Proceedings of the 30th ACM International Conference on Architectural Support for Programming Languages and Operating Systems, Volume 1},
  pages={1133--1150},
  year={2025}
}

@inproceedings{liao2026zipage,
  title={Zipage: Maintain High Request Concurrency for LLM Reasoning through Compressed PagedAttention},
  author={Liao, Mengqi and Wang, Lu and Zhang, Chaoyun and Qiao, Bo and Qin, Si and Lin, Qingwei and Rajmohan, Saravan and Zhang, Dongmei and Wan, Huaiyu},
  booktitle={Findings of the Association for Computational Linguistics: ACL 2026},
  pages={7716--7737},
  year={2026}
}

@article{kang2025turboattention,
  title={TurboAttention: Efficient attention approximation for high throughputs llm},
  author={Kang, Hao and Bharadwaj, Srikant and Hensman, James and Krishna, Tushar and R{\"u}hle, Victor and Rajmohan, Saravan},
  journal={Proceedings of Machine Learning and Systems},
  volume={7},
  year={2025}
}

@article{shazeer2019fast,
  title         = {Fast Transformer Decoding: One Write-Head is All You Need},
  author        = {Shazeer, Noam},
  journal = {arXiv preprint arXiv:1911.02150},
  year          = {2019},
  eprint        = {1911.02150},
  archivePrefix = {arXiv},
  primaryClass  = {cs.NE},
  url           = {https://arxiv.org/abs/1911.02150}
}

@inproceedings{ainslie2023gqa,
  title     = {{GQA}: Training Generalized Multi-Query Transformer Models from Multi-Head Checkpoints},
  author    = {Ainslie, Joshua and Lee-Thorp, James and de Jong, Michiel and Zemlyanskiy, Yury and Lebr{\'o}n, Federico and Sanghai, Sumit},
  booktitle = {Proceedings of the 2023 Conference on Empirical Methods in Natural Language Processing (EMNLP)},
  pages     = {4895--4901},
  year      = {2023},
  address   = {Singapore},
  publisher = {Association for Computational Linguistics},
  url       = {https://aclanthology.org/2023.emnlp-main.298/}
}

@inproceedings{tang2025razorattention,
  title={Razorattention: Efficient kv cache compression through retrieval heads},
  author={Tang, Hanlin and Lin, Yang and Lin, Jing and Han, Qingsen and Ke, Danning and Hong, Shikuan and Yao, Yiwu and Wang, Gongyi},
  booktitle={International Conference on Learning Representations},
  volume={2025},
  pages={16632--16646},
  year={2025}
}

@article{team2026gemma,
  title={Gemma 4 technical report},
  author={Team, Gemma and Abd, Sherif El and Aggarwal, Vaibhav and Algayres, Robin and Andreev, Alek and Bachem, Olivier and Ballantyne, Ian and Brick, Cormac and C{\u{a}}rbune, Victor and Casbon, Michelle and others},
  journal={arXiv preprint arXiv:2607.02770},
  year={2026}
}

@inproceedings{wen2025token,
  title={Token pruning in multimodal large language models: Are we solving the right problem?},
  author={Wen, Zichen and Gao, Yifeng and Li, Weijia and He, Conghui and Zhang, Linfeng},
  booktitle={Findings of the Association for Computational Linguistics: ACL 2025},
  pages={15537--15549},
  year={2025}
}

@inproceedings{xu2025think,
  title={Think: Thinner key cache by query-driven pruning},
  author={Xu, Yuhui and Jie, Zhanming and Dong, Hanze and Wang, Lei and Lu, Xudong and Zhou, Aojun and Saha, Amrita and Xiong, Caiming and Sahoo, Doyen},
  booktitle={International Conference on Learning Representations},
  volume={2025},
  pages={56691--56709},
  year={2025}
}

@article{huang2025kv,
  title={KV Admission: Learning What to Write for Efficient Long-Context Inference},
  author={Huang, Yen-Chieh and Hsiu, Pi-Cheng and Fang, Rui and Chen, Ming-Syan},
  journal={arXiv preprint arXiv:2512.17452},
  year={2025}
}

@inproceedings{pope2023efficiently,
  title     = {Efficiently Scaling Transformer Inference},
  author    = {Pope, Reiner and Douglas, Sholto and Chowdhery, Aakanksha
               and Devlin, Jacob and Bradbury, James and Heek, Jonathan
               and Xiao, Kefan and Agrawal, Shivani and Dean, Jeff},
  booktitle = {Proceedings of Machine Learning and Systems},
  volume    = {5},
  year      = {2023}
}

@article{chowdhery2022palmscalinglanguagemodeling,
      title={PaLM: Scaling Language Modeling with Pathways}, 
      author={Aakanksha Chowdhery and Sharan Narang and Jacob Devlin and Maarten Bosma and Gaurav Mishra and Adam Roberts and Paul Barham and Hyung Won Chung and Charles Sutton and Sebastian Gehrmann and Parker Schuh and Kensen Shi and Sasha Tsvyashchenko and Joshua Maynez and Abhishek Rao and Parker Barnes and Yi Tay and Noam Shazeer and Vinodkumar Prabhakaran and Emily Reif and Nan Du and Ben Hutchinson and Reiner Pope and James Bradbury and Jacob Austin and Michael Isard and Guy Gur-Ari and Pengcheng Yin and Toju Duke and Anselm Levskaya and Sanjay Ghemawat and Sunipa Dev and Henryk Michalewski and Xavier Garcia and Vedant Misra and Kevin Robinson and Liam Fedus and Denny Zhou and Daphne Ippolito and David Luan and Hyeontaek Lim and Barret Zoph and Alexander Spiridonov and Ryan Sepassi and David Dohan and Shivani Agrawal and Mark Omernick and Andrew M. Dai and Thanumalayan Sankaranarayana Pillai and Marie Pellat and Aitor Lewkowycz and Erica Moreira and Rewon Child and Oleksandr Polozov and Katherine Lee and Zongwei Zhou and Xuezhi Wang and Brennan Saeta and Mark Diaz and Orhan Firat and Michele Catasta and Jason Wei and Kathy Meier-Hellstern and Douglas Eck and Jeff Dean and Slav Petrov and Noah Fiedel},
      year={2022},
      journal = {arXiv preprint arXiv:2204.02311},
      eprint={2204.02311},
      archivePrefix={arXiv},
      primaryClass={cs.CL},
      url={https://arxiv.org/abs/2204.02311}, 
}

@article{deepseekv2,
  title   = {{DeepSeek-V2}: A Strong, Economical, and Efficient
             Mixture-of-Experts Language Model},
  author  = {DeepSeek-AI},
  journal = {arXiv preprint arXiv:2405.04434},
  year    = {2024}
}

@inproceedings{luong2015effective,
  title     = {Effective Approaches to Attention-based Neural Machine Translation},
  author    = {Luong, Minh-Thang and Pham, Hieu and Manning, Christopher D.},
  booktitle = {Proceedings of the 2015 Conference on Empirical Methods in Natural Language Processing},
  pages     = {1412--1421},
  year      = {2015},
  publisher = {Association for Computational Linguistics},
  address   = {Lisbon, Portugal}
}

@inproceedings{zellers2019hellaswag,
  title     = {HellaSwag: Can a Machine Really Finish Your Sentence?},
  author    = {Zellers, Rowan and Holtzman, Ari and Bisk, Yonatan and Farhadi, Ali and Choi, Yejin},
  booktitle = {Proceedings of the 57th Annual Meeting of the Association for Computational Linguistics},
  year      = {2019}
}

@article{allenai:arc,
  author  = {Peter Clark and Isaac Cowhey and Oren Etzioni and Tushar Khot and
             Ashish Sabharwal and Carissa Schoenick and Oyvind Tafjord},
  title   = {Think you have Solved Question Answering? Try ARC, the AI2 Reasoning Challenge},
  journal = {arXiv:1803.05457v1},
  year    = {2018}
}

@inproceedings{mostafazadeh-EtAl:2016:N16-1,
  author    = {Mostafazadeh, Nasrin and Chambers, Nathanael and He, Xiaodong and
               Parikh, Devi and Batra, Dhruv and Vanderwende, Lucy and Kohli, Pushmeet and Allen, James},
  title     = {A Corpus and Cloze Evaluation for Deeper Understanding of Commonsense Stories},
  booktitle = {Proceedings of the 2016 Conference of the North American Chapter of the
               Association for Computational Linguistics: Human Language Technologies},
  month     = {June},
  year      = {2016},
  address   = {San Diego, California},
  publisher = {Association for Computational Linguistics},
  pages     = {839--849},
  url       = {http://www.aclweb.org/anthology/N16-1098}
}

@inproceedings{welbl-etal-2017-crowdsourcing,
  title     = {Crowdsourcing Multiple Choice Science Questions},
  author    = {Welbl, Johannes and Liu, Nelson F. and Gardner, Matt},
  booktitle = {Proceedings of the 3rd Workshop on Noisy User-generated Text},
  year      = {2017},
  pages     = {94--106},
  publisher = {Association for Computational Linguistics}
}

@inproceedings{clark2019boolq,
  title     = {BoolQ: Exploring the Surprising Difficulty of Natural Yes/No Questions},
  author    = {Clark, Christopher and Lee, Kenton and Chang, Ming-Wei and
               Kwiatkowski, Tom and Collins, Michael and Toutanova, Kristina},
  booktitle = {NAACL},
  year      = {2019}
}

@article{radford2019language,
  title   = {Language Models are Unsupervised Multitask Learners},
  author  = {Radford, Alec and Wu, Jeff and Child, Rewon and Luan, David and Amodei, Dario and Sutskever, Ilya},
  journal = {OpenAI Blog},
  volume  = {1},
  number  = {8},
  pages   = {9},
  year    = {2019}
}

@inproceedings{wolf-etal-2020-transformers,
  title     = {Transformers: State-of-the-Art Natural Language Processing},
  author    = {Wolf, Thomas and Debut, Lysandre and Sanh, Victor and Chaumond, Julien and
               Delangue, Clement and Moi, Anthony and Cistac, Pierric and Rault, Tim and
               Louf, R{\'e}mi and Funtowicz, Morgan and Davison, Joe and Shleifer, Sam and
               von Platen, Patrick and Ma, Clara and Jernite, Yacine and Plu, Julien and
               Xu, Canwen and Le Scao, Teven and Gugger, Sylvain and Drame, Mariama and
               Lhoest, Quentin and Rush, Alexander M.},
  booktitle = {Proceedings of the 2020 Conference on Empirical Methods in Natural Language Processing: System Demonstrations},
  year      = {2020},
  pages     = {38--45},
  publisher = {Association for Computational Linguistics},
  doi       = {10.18653/v1/2020.emnlp-demos.6}
}

@inproceedings{loshchilov2019decoupled,
  title     = {Decoupled Weight Decay Regularization},
  author    = {Loshchilov, Ilya and Hutter, Frank},
  booktitle = {International Conference on Learning Representations},
  year      = {2019},
  url       = {https://openreview.net/forum?id=Bkg6RiCqY7}
}

@inproceedings{merity2017pointer,
  title     = {Pointer Sentinel Mixture Models},
  author    = {Merity, Stephen and Xiong, Caiming and Bradbury, James and Socher, Richard},
  booktitle = {International Conference on Learning Representations},
  year      = {2017},
  url       = {https://openreview.net/forum?id=Byj72udxe}
}

@inproceedings{biderman2023pythia,
  title     = {Pythia: A Suite for Analyzing Large Language Models Across Training and Scaling},
  author    = {Biderman, Stella and Schoelkopf, Hailey and Anthony, Quentin and
               Bradley, Herbie and O'Brien, Kyle and Hallahan, Eric and
               Khan, Mohammad Aflah and Purohit, Shivanshu and
               Prashanth, USVSN Sai and Raff, Edward and Skowron, Aviya and
               Sutawika, Lintang and van der Wal, Oskar},
  booktitle = {Proceedings of the 40th International Conference on Machine Learning},
  series    = {Proceedings of Machine Learning Research},
  volume    = {202},
  pages     = {2397--2430},
  publisher = {PMLR},
  year      = {2023},
  url       = {https://arxiv.org/abs/2304.01373}
}

@article{qwen2,
  title   = {Qwen2 Technical Report},
  author  = {Yang, An and Yang, Baosong and Hui, Binyuan and Zheng, Bo and
             Yu, Bowen and Zhou, Chang and Li, Chengpeng and Li, Chengyuan and
             Liu, Dayiheng and Huang, Fei and Dong, Guanting and Wei, Haoran and
             Lin, Huan and Tang, Jialong and Wang, Jialin and Yang, Jian and
             Tu, Jianhong and Zhang, Jianwei and Ma, Jianxin and Xu, Jin and
             Zhou, Jingren and Bai, Jinze and others},
  journal = {arXiv preprint arXiv:2407.10671},
  year    = {2024},
  url     = {https://arxiv.org/abs/2407.10671}
}

@article{su2024roformer,
  title   = {{RoFormer}: Enhanced Transformer with Rotary Position Embedding},
  author  = {Su, Jianlin and Ahmed, Murtadha and Lu, Yu and Pan, Shengfeng and
             Bo, Wen and Liu, Yunfeng},
  journal = {Neurocomputing},
  volume  = {568},
  pages   = {127063},
  year    = {2024},
  doi     = {10.1016/j.neucom.2023.127063},
  url     = {https://arxiv.org/abs/2104.09864}
}

@misc{llama3.2_2024,
  title        = {Llama 3.2: Revolutionizing Edge {AI} and Vision
                  with Open, Customizable Models},
  author       = {{AI at Meta}},
  year         = {2024},
  month        = sep,
  howpublished = {Meta AI Blog},
  url          = {https://ai.meta.com/blog/llama-3-2-connect-2024-vision-edge-mobile-devices/}
}

@article{slim_attention2025,
  title={Slim attention: cut your context memory in half without loss of accuracy -- K-cache is all you need for MHA},
  author={Graef, Nils and Wasielewski, Andrew},
  journal={arXiv preprint arXiv:2503.05840},
  year={2025},
  eprint={2503.05840},
  archivePrefix={arXiv},
  primaryClass={cs.LG},
  url={https://arxiv.org/abs/2503.05840}
}

@article{edward2026qv,
  title={QV May Be Enough: Toward the Essence of Attention in LLMs},
  author={Edward, Zhang},
  journal={arXiv preprint arXiv:2603.15665},
  year={2026}
}

@inproceedings{kayyam2026transformers,
  title={Do Transformers Need Three Projections? Systematic Study of QKV Variants},
  author={Kayyam, Ali and Gopal, Anusha Madan and Lewis, M Anthony},
  booktitle={Forty-third International Conference on Machine Learning},
  year={2026}
}

@article{penedo2024fineweb,
  title={The fineweb datasets: Decanting the web for the finest text data at scale},
  author={Penedo, Guilherme and Kydl{\'\i}{\v{c}}ek, Hynek and Lozhkov, Anton and Mitchell, Margaret and Raffel, Colin and Von Werra, Leandro and Wolf, Thomas and others},
  journal={Advances in Neural Information Processing Systems},
  volume={37},
  pages={30811--30849},
  year={2024}
}

@inproceedings{socher2013recursive,
  title={Recursive deep models for semantic compositionality
         over a sentiment treebank},
  author={Socher, Richard and Perelygin, Alex and Wu, Jean
          and Chuang, Jason and Manning, Christopher D
          and Ng, Andrew Y and Potts, Christopher},
  booktitle={Proceedings of the 2013 Conference on Empirical
             Methods in Natural Language Processing},
  pages={1631--1642},
  year={2013}
}

@inproceedings{zhang2015character,
  title={Character-level convolutional networks for text classification},
  author={Zhang, Xiang and Zhao, Junbo and LeCun, Yann},
  booktitle={Advances in Neural Information Processing Systems},
  volume={28},
  year={2015}
}

\appendix
\section{Proofs of Theorems}
\label{appendix:proof_theorem}



\subsection{Theoretical Equivalence of Keyless and Standard Attention}
\label{sec:equivalence}

\noindent Proof of Theorem \ref{thm:single_head}
\begin{proof}
Since $W^V \in \mathbb{R}^{d \times d}$ has full rank $d$,
the matrix $(W^V)^\top$ is invertible.
Define $\tilde{W}^Q = W^Q(W^K)^\top(W^V)^{-\top}$.
Then:
\begin{align}
\begin{split}
    \tilde{W}^Q(W^V)^\top
    &= W^Q(W^K)^\top (W^V)^{-\top}(W^V)^\top\\
    &= W^Q(W^K)^\top \cdot I_d
    = W^Q(W^K)^\top.
\end{split}
\end{align}
Uniqueness follows from the invertibility of $(W^V)^\top$:
if $\tilde{W}^Q_1(W^V)^\top = \tilde{W}^Q_2(W^V)^\top$,
then $(\tilde{W}^Q_1 - \tilde{W}^Q_2)(W^V)^\top = 0$,
and right-multiplying by $(W^V)^{-\top}$ gives
$\tilde{W}^Q_1 = \tilde{W}^Q_2$. $\qed$
\end{proof}

\begin{remark}
\label{rem:single_head}
In the single-head case with $d_k = d$, the value projection
$W^V \in \mathbb{R}^{d \times d}$ is square and generically
invertible under standard random initialization.
The result is therefore unconditional: for any $W^Q$ and $W^K$,
a unique equivalent Keyless Attention parametrization always exists.
The multi-head case below requires an additional condition because
$W^V_h \in \mathbb{R}^{d \times d_k}$ is rectangular with $d_k < d$
and is no longer invertible.
\end{remark}


\noindent Proof of Theorem \ref{thm:multi_head}

\begin{proof}
Fix head $h$.
Let $B_h = (W^V_h)^\top \in \mathbb{R}^{d_k \times d}$.
Since $W^V_h$ has full column rank $d_k$,
the Gram matrix $(W^V_h)^\top W^V_h$ is invertible and
$B_h$ admits a right pseudoinverse
$B_h^+ = W^V_h((W^V_h)^\top W^V_h)^{-1} \in \mathbb{R}^{d \times d_k}.$
 
Setting $\tilde{W}^Q_h = \Omega_h B_h^+$ and expanding:
\begin{equation}
    \tilde{W}^Q_h(W^V_h)^\top
    = \Omega_h B_h^+ B_h
    = \Omega_h P_{W^V_h},
\end{equation}
where $$P_{W^V_h} = B_h^+ B_h = W^V_h((W^V_h)^\top W^V_h)^{-1}
(W^V_h)^\top$$ is the orthogonal projector onto $\mathrm{col}(W^V_h)$.
By condition~\eqref{eq:existence_condition},
$\mathrm{col}(\Omega_h^\top) \subseteq \mathrm{col}(W^V_h)$,
so $\Omega_h P_{W^V_h} = \Omega_h$, giving
$\tilde{W}^Q_h(W^V_h)^\top = \Omega_h = W^Q_h(W^K_h)^\top$.
 
For the general solution: any $\tilde{W}^Q_h$ satisfying
$\tilde{W}^Q_h B_h = \Omega_h$ can be written as
$\tilde{W}^Q_h = \Omega_h B_h^+ + N$ where $NB_h = 0$,
i.e.\ $N(W^V_h)^\top = 0$.
The proof is independent for each head $h$;
no relationship between heads is assumed. 
\end{proof}



\section{Architecture and Implementation Details}
\label{appendix:arch}

\subsection{Pythia 410M}
Pythia 410M uses 24 layers, hidden size 1,024, 16 attention heads
(head dim 64), intermediate size 4,096, GELU activations, partial
RoPE ($\theta = 10{,}000$, factor $= 0.25$), and a parallel residual
connection. Both models are initialised from scratch using the official
\texttt{GPTNeoXConfig}. Training uses AdamW with $\eta = 10^{-4}$,
weight decay 0.01, cosine decay with 5\% linear warmup, gradient
accumulation 2, and early stopping with patience 2 on validation loss.

\subsection{Qwen2 1.5B}
Qwen2 1.5B uses 28 layers, hidden size 1,536, 12 query heads and 2 KV
heads (GQA, $6\times$ grouping, head dim 128), and full RoPE.
In our Keyless Attention implementation, we use 2 value-routing heads
following the GQA format, with $W^{Q_2}$ made head-specific (one matrix
per query head) to preserve query-head-specific routing.
The causal mask is constructed on-the-fly to avoid pre-registering
large buffers at sequence length 32,768.
Training hyperparameters are identical to Pythia 410M above.

\subsection{Llama 3.2 1B}
Llama 3.2 1B~\citep{llama3.2_2024} uses 16 layers, hidden size 2,048,
32 query heads and 8 KV heads (GQA, $4\times$ grouping, head dim 64),
intermediate size 8,192, SiLU activations, and full RoPE with the
Llama3-scaled variant ($\theta = 500{,}000$, original maximum position
length 8,192, scaled to 131,072 via high-frequency and low-frequency
factors).
Input and output embeddings are tied (\texttt{tie\_word\_embeddings =
true}). In our Keyless Attention implementation, we follow the same GQA
adaptation as Qwen2: we use 8 value-routing heads matching the KV head
count, with $W^{Q_2}$ made head-specific (one matrix per query head)
to preserve query-head-specific routing across the $4\times$ grouping.
Training hyperparameters are identical to Pythia 410M and Qwen2 1.5B.

\begin{table}[h]
\small
\centering
\caption{Best validation loss and perplexity across all five models.
$\dagger$~Qwen2 1.5B and Llama 3.2 1B use GQA;
cache reduction applies to the value cache.}
\label{tab:all_models}
\begin{tabular}{llccc}
\toprule
\textbf{Model} & \textbf{Method}
  & \textbf{Val Loss} & \textbf{PPL} & \textbf{Cache} \\
\midrule
GPT-2 280M
  & QKV     & 3.5180 & 33.71 & 100\% \\
  & Keyless & 3.5216 & 33.84 & \textbf{50\%} \\
\midrule
GPT-2 557M
  & QKV     & 3.5116 & 33.50 & 100\% \\
  & Keyless & \textbf{3.5044} & \textbf{33.26} & \textbf{50\%} \\
\midrule
Pythia 410M
  & QKV     & 3.7133 & 40.99 & 100\% \\
  & Keyless & \textbf{3.6692} & \textbf{39.22} & \textbf{50\%} \\
\midrule
Qwen2 1.5B$^\dagger$
  & QKV     & 3.5376 & 34.38 & 100\% \\
  & Keyless & \textbf{3.5201} & \textbf{33.79} & \textbf{50\%} \\
\midrule
Llama 3.2 1B$^\dagger$
  & QKV     & 3.6805 & 39.09 & 100\% \\
  & Keyless & \textbf{3.6529} & \textbf{38.59} & \textbf{50\%} \\
\bottomrule
\end{tabular}
\end{table}

\subsection{Cross-Architecture Summary}
\label{sec:cross_arch}

Results across all five models are summarized in
Table~\ref{tab:all_models}. The five models span three distinct positional encoding schemes,
two residual connection designs, and two attention grouping
strategies, providing a broad cross-architecture validation of
Keyless Attention. Keyless Attention wins on perplexity in 4 out of 5 models and on
4 out of 5 downstream benchmarks, with the 50\% KV cache reduction
holding exactly across all architectures regardless of head count
or grouping ratio.
Beyond the best-epoch results, the post-peak validation trajectories 
reveal a consistent pattern:
across all five models, Keyless Attention's validation loss degrades
more slowly after the best epoch than standard attention's, indicating
greater robustness to overfitting. This pattern holds regardless of
architecture, head configuration, or model scale.
The consistent improvements are particularly compelling given the
breadth of architectural variation summarized in
Table~\ref{tab:arch_comparison}, underscoring that the gains are
not specific to any single design choice.

\begin{table}[h]
\small
\centering
\caption{Architectural properties of the five models evaluated.
pRoPE: partial RoPE.
GQA: Grouped Query Attention.
MHA: Multi-Head Attention.
Par: parallel residual; Seq: sequential residual.}
\label{tab:arch_comparison}
\begin{tabular}{lccccc}
\toprule
\textbf{Property}
  & \textbf{GPT-2}
  & \textbf{GPT-2}
  & \textbf{Pythia}
  & \textbf{Qwen2}
  & \textbf{Llama 3.2} \\
  & \textbf{280M}
  & \textbf{557M}
  & \textbf{410M}
  & \textbf{1.5B}
  & \textbf{1B} \\
\midrule
Parameters      & 280M  & 557M  & 410M  & 1.5B  & 1B    \\
Layers          & 12    & 36    & 24    & 28    & 16    \\
Hidden size     & 1,024 & 1,024 & 1,024 & 1,536 & 2,048 \\
Heads (Q/KV)    & 8/8   & 8/8   & 16/16 & 12/2  & 32/8  \\
Head dim        & 128   & 128   & 64    & 128   & 64    \\
Positional emb. & Abs.  & Abs.  & pRoPE & RoPE  & RoPE  \\
Attention       & MHA   & MHA   & MHA   & GQA ($6\times$) & GQA ($4\times$) \\
Residual        & Seq.  & Seq.  & Par.  & Seq.  & Seq.  \\
\bottomrule
\end{tabular}
\end{table}

\end{document}